\newif\ifdark
\definecolor{pcolor}{HTML}{1E1E1E}
\definecolor{tcolor}{HTML}{C5C5C5}
\definecolor{pcolor}{HTML}{FDF6E3}
\definecolor{tcolor}{HTML}{333333}
\newtheorem{theorem}{Theorem}
\newtheorem{lemma}{Lemma}
\newtheorem{definition}{Definition}
\crefname{section}{Sec.}{Sec.}
\Crefname{section}{Sec.}{Sec.}
\crefname{equation}{eq.}{eq.}
\Crefname{equation}{Eq.}{Eq.}
\crefname{appendix}{Appendix}{Appendix}
\Crefname{appendix}{Appendix}{Appendix}
\DeclareRobustCommand{\cev}[1]{%
  \mathpalette\do@cev{#1}%
}
\newcommand{\do@cev}[2]{%
  \fix@cev{#1}{+}%
  \reflectbox{$\m@th#1\vec{\reflectbox{$\fix@cev{#1}{-}\m@th#1#2\fix@cev{#1}{+}$}}$}%
  \fix@cev{#1}{-}%
}
\newcommand{\fix@cev}[2]{%
  \ifx#1\displaystyle
    \mkern#23mu
  \else
    \ifx#1\textstyle
      \mkern#23mu
    \else
      \ifx#1\scriptstyle
        \mkern#22mu
      \else
        \mkern#22mu
      \fi
    \fi
  \fi
}
\newcommand{\Tone}{\colorbox{blue!30}{$T_1$}\xspace}
\newcommand{\Ttwo}{\colorbox{blue!20}{$T_2$}\xspace}
\newcommand{\Tthree}{\colorbox{blue!10}{$T_3$}\xspace}
\newcommand{\Tfour}{\colorbox{blue!5}{$T_4$}\xspace}
\newcommand{\Tfive}{\colorbox{red!30}{$T_5$}\xspace}
\newcommand{\Tsix}{\colorbox{red!20}{$T_6$}\xspace}
\newcommand{\Tseven}{\colorbox{red!10}{$T_7$}\xspace}
\newcommand{\Teight}{\colorbox{red!5}{$T_8$}\xspace}
\newcommand{\midsepremove}{\aboverulesep = 0.4mm \belowrulesep = 0.4mm}
\newcommand{\midsepdefault}{\aboverulesep = 0.605mm \belowrulesep = 0.984mm}
\newcommand{\ul}[1]{\underline{{#1}}}
\newcommand\methodname{GDT}
\def\optrow#1\\{} 
\def\labelswitch#1{\label{#1}}
\def\dontshowthisinappendix#1{#1}
\def\showthisinappendix#1{}
\def\httilde{\mbox{\tt\raisebox{-.5ex}{\symbol{126}}}}
\ificcvfinal\pagestyle{empty}\fi
\renewcommand{\paragraph}{%
  \@startsection{paragraph}{4}%
  {\z@}{0em}{-1em}%
  {\normalfont\normalsize\bfseries}%
}
\author{%
  Mandela Patrick$^{1,2}$\thanks{Joint first authors} \qquad
  Yuki M. Asano$^2$\footnotemark[1] \qquad Polina Kuznetsova$^1$ \qquad Ruth Fong$^2$ \\ \qquad João F. Henriques$^2$
  \qquad Geoffrey Zweig$^1$ \qquad \qquad \qquad Andrea Vedaldi$^{1,2}$ \vspace{1mm}
\\
  $^1$ Facebook AI Research \\  \texttt{mandelapatrick@facebook.com}\vspace{1mm}\\
  $^2$ Visual Geometry Group, University of Oxford \\ \texttt{yuki@robots.ox.ac.uk} \\
}
\title{On Compositions of Transformations in \\Contrastive Self-Supervised Learning}
\begin{document}

\maketitle
\def\httilde{\mbox{\tt\raisebox{-.5ex}{\symbol{126}}}}
\ificcvfinal\pagestyle{empty}\fi



\begin{abstract}
In the image domain, excellent representations can be learned by inducing invariance to content-preserving transformations via noise contrastive learning.
In this paper, we generalize contrastive learning to a wider set of transformations, and their compositions, for which either invariance \textit{or} distinctiveness is sought.
We show that it is not immediately obvious how existing methods such as SimCLR can be extended to do so.
Instead, we introduce a number of formal requirements that all contrastive formulations must satisfy, and propose a practical construction which satisfies these requirements.
In order to maximise the reach of this analysis, we express all components of noise contrastive formulations as the choice of certain generalized transformations of the data (GDTs), including data sampling.
We then consider videos as an example of data in which a large variety of transformations are applicable, accounting for the extra modalities -- for which we analyze audio and text -- and the dimension of time.
We find that being invariant to certain transformations and distinctive to others is critical to learning effective video representations, improving the state-of-the-art for multiple benchmarks by a large margin, and even surpassing supervised pretraining.
Code and pretrained models are available\footnote{https://github.com/facebookresearch/GDT}.

\end{abstract}


\section{Introduction}

Works such as MoCo~\citep{he2019momentum}, SimCLR~\citep{Chen2020ASF}, SwAV~\citep{caron2020unsupervised} and BYOL~\citep{grill2020bootstrap} have shown that it is possible to pre-train state-of-the-art image representations without the use of any manually-provided labels.
Furthermore, many of these approaches use variants of noise contrastive learning~\citep{gutmann2010noise, Hadsell2006DimensionalityRB}.
Their idea is to learn a representation that is \emph{invariant} to transformations that leave the meaning of an image unchanged (e.g.~geometric distortion or cropping) and \emph{distinctive} to changes that are likely to alter its meaning (e.g.~replacing an image with another chosen at random).

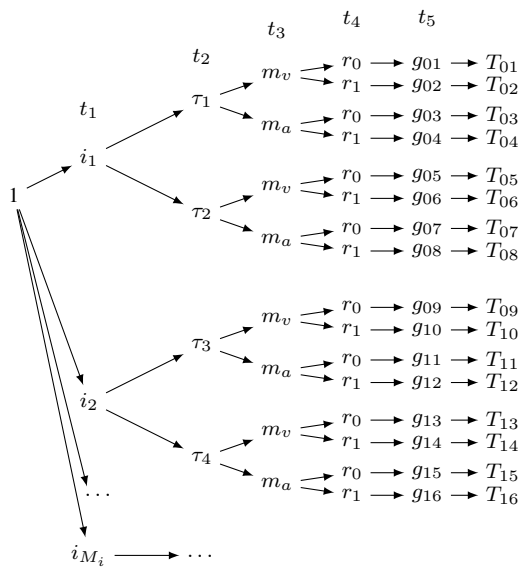
\begin{figure}

\renewcommand{\Tone}{{$T_{01}$}\xspace}
\renewcommand{\Ttwo}{{$T_{02}$}\xspace}
\renewcommand{\Tthree}{{$T_{03}$}\xspace}
\renewcommand{\Tfour}{{$T_{04}$}\xspace}
\renewcommand{\Tfive}{{$T_{05}$}\xspace}
\renewcommand{\Tsix}{{$T_{06}$}\xspace}
\renewcommand{\Tseven}{{$T_{07}$}\xspace}
\renewcommand{\Teight}{{$T_{08}$}\xspace}
\newcommand{\Tnine}{{$T_{09}$}\xspace}
\newcommand{\Tten}{{$T_{10}$}\xspace}
\newcommand{\Televen}{{$T_{11}$}\xspace}
\newcommand{\Ttwelve}{{$T_{12}$}\xspace}
\newcommand{\Tthirteen}{{$T_{13}$}\xspace}
\newcommand{\Tfourteen}{{$T_{14}$}\xspace}
\newcommand{\Tfifteen}{{$T_{15}$}\xspace}
\newcommand{\Tsixteen}{{$T_{16}$}\xspace}

\fboxsep0.5pt
\footnotesize
\begin{tikzpicture}[grow'=right,
  edge from parent/.style={draw,-latex},
  node distance=10mm,
  label distance = 0.2cm,
  level 3/.style = {level distance=1cm, sibling distance=7mm},
  level 4/.style = {level distance=1cm, sibling distance=3mm},
  ]
\node (X) {1}
child { node[label={$t_1$}] at ($(X)+(1,+0.5)$) {$i_1$}
    child { node[label={$t_2$}] {$\tau_1$} 
        child { node[label={$t_3$}] {$m_v$}
            child { node[label={$t_4$}]{$r_0$} 
                 child { node[label={$t_5$}]{$g_{01}$} child { node{\Tone}} } }
            child { node{$r_1$} 
                 child { node{$g_{02}$} child { node{\Ttwo}} } }
        }
        child { node {$m_a$}
            child { node {$r_0$} child { node{$g_{03}$} child { node{\Tthree}} } }
            child { node {$r_1$} child { node{$g_{04}$} child { node{\Tfour}} } }
        }
    }
    child { node  {$\tau_2$}
        child { node {$m_v$}
            child { node{$r_0$} child { node{$g_{05}$} child { node{\Tfive}} } }
            child { node{$r_1$} child { node{$g_{06}$} child { node{\Tsix}} } }
        }
        child { node {$m_a$}
            child { node{$r_0$} child { node{$g_{07}$} child { node{\Tseven}} } }
            child { node{$r_1$} child { node{$g_{08}$} child { node{\Teight}} } }
        }
    }
}
child { node at ($(X)+(1,-2.75)$) {$i_2$}
    child { node {$\tau_3$}
        child { node {$m_v$}
            child { node{$r_0$} child { node{$g_{09}$} child { node{\Tnine}} } }
            child { node{$r_1$} child { node{$g_{10}$} child { node{\Tten}} } }
        }
        child { node {$m_a$}
            child { node{$r_0$} child { node{$g_{11}$} child { node{\Televen}} } }
            child { node{$r_1$} child { node{$g_{12}$} child { node{\Ttwelve}} } }
        }
    }
    child { node{$\tau_4$}
        child { node {$m_v$} 
            child { node{$r_0$} child { node{$g_{13}$} child { node{\Tthirteen}} } }
            child { node{$r_1$} child { node{$g_{14}$} child { node{\Tfourteen}} } }
        }
        child { node {$m_a$}
            child { node{$r_0$} child { node{$g_{15}$} child { node{\Tfifteen}} } }
            child { node{$r_1$} child { node{$g_{16}$} child { node{\Tsixteen}} } }
        }
    }
}
child { node at ($(X)+(1,-4.0)$) {$\quad \dots$} }
child { node at ($(X)+(1,-4.8)$) {$i_{M_i}$}
    child { node{$\dots$} }
};
\end{tikzpicture}
\hfill
\caption{
    Hierarchical sampling process of generalized data transformations (GDTs).
    Shown here are the five transformations analyzed for the audio-visual training case and their compositions: data-sampling ($t_1$), time-shift  ($t_2$), modality splicing ($t_3$), time-reversal ($t_4$), and augmentation transformations, $g$ ($t_5$) to learn video representations via noise contrastive learning.
}\label{f:theory}
\end{figure}

These prior works have also shown that the choice of transformations is of primary importance for performance~\citep{Chen2020ASF, caron2020unsupervised}.
This is not just a matter of selecting a certain type of transformation, but also to specify how different transformations should be composed, and how these compositions should be sampled to from batches for training the model.
So far, these choices have been mostly driven by intuition, with little formal understanding of why certain choices may be preferable, and how these choices can be generalized.

In this work, we answer some of these questions via a formal analysis of \emph{composable transformations in contrastive learning}.
Our analysis shows how invariance and distinctiveness to individual transformations can be meaningfully incorporated in the same learning formulation.
It also provides some principles to guide the construction of the training batches.
We interpret existing sampling schemes, such as the one in SimCLR, as special cases with certain potential advantages and disadvantages.
We do so by showing how these constructions can be extended systematically to any composition of invariant and distinctive transformations.

Furthermore, we demonstrate the utility of our analysis by exploring contrastive methods for learning representations of video data.
Compared to images, videos contain a time dimension and multiple modalities, which have been shown to provide effective learning cues; for instance
\citep{owens2016ambient} leverages multiple modalities, and \citep{avts,Chung16a} the time dimension.
We show how these effects can be incorporated in a uniform manner in contrastive learning by considering a suitable class of \emph{generalized data 
transformations} (GDTs).
GDTs capture standard augmentations, as well as temporal transformations, modality slicing and data sampling.
The advantages of using GDTs is that they allow us to base the entire design of the learning formulation (e.g., how to write a coherent learning objective and how to sample batches) on a small number of design principles that our analysis has identified.

With this, we make some notable findings for contrastive video representation learning.
First, we show that using this wider class of transformations greatly exceeds the performance that can be obtained by a vanilla applications of image-centric methods such as SimCLR to video data.
By leveraging time and multiple modalities, we obtain large performance gains, almost \emph{doubling} the performance.
Second, we show that just learning representations that are invariant to more and more transformations is \emph{not} optimal, at least when it comes to video data; instead, combining invariance to certain factors with distinctiveness to others performs better.
To the best of our knowledge, this is the first time such an effect has been demonstrated in contrastive learning.


We also set the new state of the art in audio-visual representation learning, with both  small and large video pretraining datasets on a variety of downstream tasks.
In particular, we achieve $94.1\%$ and $67.4\%$ on the standardized UCF-101~\citep{UCF101} and HMDB-51~\citep{kuehne2011hmdb} action recognition benchmarks, when pretrained on HowTo100M~\citep{miech2019howto100m}, and $95.2\%$ and $72.8\%$ respectively when pretrained on IG65M~\citep{Ghadiyaram2019}.

\section{Related work}


\textbf{Self-supervised learning from images and videos.}
A variety of pretext tasks have been proposed to learn representations from unlabelled \textbf{images}.
Some tasks leverage the spatial context in images \citep{doersch2015unsupervised, noroozi2016unsupervised} to train CNNs, while others create pseudo classification labels via artificial rotations \citep{gidaris2018unsupervised}, or clustering features \citep{ji2018invariant,caron2018deep, caron2019unsupervised,caron2020unsupervised,asano2020self,gidaris2020learning}.
Colorization~\citep{zhang16colorful,zhang2017split},
inpainting~\citep{pathak2016context}, solving jigsaw puzzles~\citep{noroozi2017representation}, as well as the contrastive methods detailed below, have been proposed for self-supervised image representation learning.
Some of the tasks that use the space dimension of images have been extended to the space-time dimensions of \textbf{videos} by crafting equivalent tasks.
These include jigsaw puzzles \citep{kim2019self}, and predicting rotations~\citep{jing2018self} or future frames~\citep{han2019video}.
Other tasks leverage the temporal dimension of videos to learn representations by predicting shuffled frames~\citep{misra2016shuffle}, the direction of time~\citep{wei2018learning}, motion \citep{motion_statistics}, temporal ordering~\citep{clip_order, lee2017unsupervised}, and playback speed \citep{benaim2020speednet, cho2020selfsupervised,fernando17self-supervised}.
These pretext-tasks can be framed as GDTs.

\textbf{Multi-modal learning.}
Videos, unlike images, are a rich source of a variety of modalities such as speech, audio, and optical flow, and their correlation can be used as a supervisory signal.
This idea has been present as early as 1994~\citep{NIPS1993_831}.
Only recently, however, has multi-modal learning been used to successfully learn effective representations by leveraging the natural correspondence~\citep{owens2016ambient, aytar2016soundnet, Arandjelovic17, morgado2020avid, alwassel2019self, asano2020labelling} and synchronization~\citep{Chung16a, avts, owens2018audio} between the audio and visual streams.
A number of recent papers have leveraged speech as a weak supervisory signal to train video representations \citep{sun2019videobert, sun2019contrastive, miech2019endtoend, li2020learning, nagrani2020} and recently \cite{alayrac2020selfsupervised}, who use speech, audio and video.
Other works incorporate optical flow and other modalities~\citep{zhao2019sound, piergiovanni2020evolving, Han2020SelfsupervisedCF, Han_2020} to learn representations.
In CMC~\citep{tian2019contrastive}, representations are learned with different views such as different color channels or modalities to solely induce multi-view invariance.
In contrast, our work extends this to and analyses multi-modal transformations and examines their utility as an invariant \textit{or} distinctive learning signal.

\textbf{Noise Contrastive Loss.}
Noise contrastive losses~\citep{Hadsell2006DimensionalityRB, gutmann2010noise} measure the similarity between sample pairs in a representational space and are at the core of several recent works on unsupervised feature learning.
They yield good performance for learning image~\citep{Wu_2018_CVPR, oord2018representation,hjelm2018learning, tian2019contrastive, he2019momentum, misra2019selfsupervised,henaff2019data,Chen2020ASF, li2020prototypical,tian2020makes} and video~\citep{sohn2016improved, miech2019endtoend, han2019video, sun2019contrastive, li2020learning, morgado2020avid, yao2021seco, anand2019unsupervised, hjelm2020representation} representations, and circumvent the need to explicitly specify what information needs to be discarded via a designed task.

We leverage the noise contrastive loss as a learning framework to encourage the network to learn desired invariance and distinctiveness to data transformations.
The GDT framework can be used to combine and extend many of these cues, contrastive or not, in a single noise contrastive formulation.



\begin{figure}[!t]
    \centering
    \includegraphics[width=0.7\linewidth]{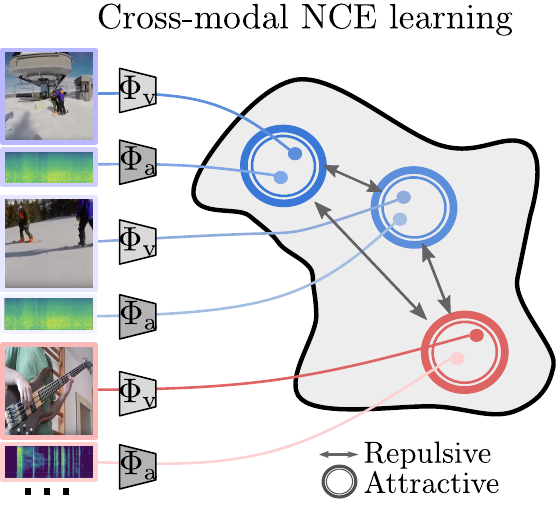}
    \caption{\textbf{Example instantiation.}
    The embedding is learned via noise contrastive (NCE) learning. Here we show the case of audio-visual sample and time-shift distinctiveness: video-audio embeddings from the same video at the same time are pulled together, while audio-visual sample pairs from different videos and different starting times are pushed apart. 
    \label{fig:nce}}
\end{figure}

\section{Method}\label{s:method-new}

We generalize contrastive methods such as CPC~\citep{oord2018representation}, PIRL~\citep{misra2019selfsupervised}, MoCo~\citep{he2019momentum}, SimCLR~\citep{Chen2020ASF}, and SwAV~\citep{caron2020unsupervised} to learn representations that can be invariant or distinctive to any number of transformations.

Given a collection $x$ of data such as images or videos, we generate training samples
$$
   x(t_1,\dots,t_M) \in \mathcal{X}.
$$
by applying a sequence of $M$ transformations \mbox{$T=(t_1,\dots,t_M)$} to the collection.
We consider typical transformations such as data augmentations (e.g., randomly cropping an image).
We also find it useful to express in the same manner other operations such as extracting a specific image or video from the collection or extracting a specific modality from a video.
We call these \emph{generalized data transformations} (GDTs).

To provide a concrete example, in a standard contrastive learning formulation such as SimCLR, the first transformation $t_1=i\in\{1,\dots,|x|\}$ extracts an image $x_i$ from the collection $x$ and the second transformation $t_2=g$ applies to it a random augmentation, so that we can write
$
   x(t_1,t_2) = g(x_i).
$
The goal is to learn a representation $\Phi : \mathcal{X}\rightarrow \mathbb{R}^d$ that identifies the image regardless of the augmentation;
in other words, $\Phi$ should be invariant to the choice of $t_2$ and distinctive for the choice of $t_1$.

We wish to generalize this construction to richer data such as videos.
Compared to images, videos contain multiple modalities and additional dimensions, which allows to consider qualitatively different transformations such as time shift, time reversal, and modality slicing.
This generalization is however non-trivial.
First, when considering $M>2$ transformations, we have a choice of making the representation invariant or distinctive to each of them independently.
For instance, video representations may benefit from being distinctive to time shift and/or time reversal rather than invariant to them.
It is not immediately obvious how contrastive learning should be modified to incorporate these different choices.
Another less apparent but important issue is how training data batches should be formed.
Contrastive learning formulations minimize, in fact, a loss that involves comparing (contrasting) the representations of different samples, and is thus \emph{not decomposable}.
In practice, the loss is approximated by sampling batches of data, and how this is done has a major effect on the performance.
In the previous example of SimCLR, if transformation $(t_1,t_2)$ is included in the batch, so must be a complementary transformation $(t_1,t_2')$ that differs only in the second factor $t_2\not=t'_2$.
This is required in order to learn the desired invariance.
It also means that transformations in a batch cannot be sampled independently.
A way to guarantee that both $(t_1,t_2)$ and $(t_1,t_2')$ are in the batch is to consider all possible combinations
$
\mathcal{T}_1\times\mathcal{T}_2
$
of two sets of transformations $\mathcal{T}_1$ and $\mathcal{T}_2$.
However this is statistically inefficient because it applies the same augmentations $\mathcal{T}_2$  to all images in the batch.
Instead, SimCLR samples at random $B/2$ images and then applies to them $B$ independently sampled augmentations.
This is better than the scheme above that would only use $B / |\mathcal{T}_1| = 2$ different augmentations.
However, it is unclear how this strategy for sampling diverse transformations can be extended to $M\! >\! 2$ factors.
This is studied next. 

\subsection{Compositional contrastive learning}\label{s:main-method}

Given a batch $\mathcal{T}$ of data transformations, we consider the learning objective:
\begin{multline}\label{e:contrastive}
\footnotesize
\mathcal{L}(\Phi; \mathcal{T})
=
-
\sum_{\mathclap{T,T' \in \mathcal{T}}}
c(T,T')w(T,T') 
\\
\cdot \log
\left(
\frac{
e^{\langle \Phi(x(T)), \Phi(x(T')) \rangle/\rho}
}{
\sum\limits_{\mathrlap{T'' \in \mathcal{T}}}
w(T,T'')~
e^{\langle \Phi(x(T)), \Phi(x(T'')) \rangle/\rho}
}
\right).
\end{multline}
where $\rho > 0$ is a temperature parameter.
The \emph{contrast function} $c(T,T')\in\{0,1\}$ has the following interpretation:
when $c(T,T')=1$, then the representations $\Phi(x(T))$ and $\Phi(x(T'))$ are pulled together (invariance), and when $c(T,T')=0$ they are pushed apart (distinctiveness).
For example, in SimCLR, we set $c(T,T') = c((i,g),(i',g')) = \delta_{i=i'}$ to push apart the representations of different images $(i,i')$ while remaining invariant to a transformation pair $(g,g')$.
The \emph{weight function} $w$ is a second binary function that focuses learning on more informative transformation pairs; for instance, SimCLR sets $w(T,T')=\delta_{T\not=T'}$ to avoid focusing learning invariance to identical transformation $T=T'$ as this is trivially satisfied.
Next, we provide a semi-formal analysis of this formulation, leaving the details to~\cref{s:theory}.

\paragraph{Multiple invariances and distinctiveness.}

The key to extending~\cref{e:contrastive} to $M>2$ transformation is to build the function $c(T,T')$.
We do this one factor a time.
If we wish the representation to be distinctive to factor $t_m$, we set $c(t_m,t_m')=\delta_{t_m=t_m'}$.
If we wish it to be invariant to it, we set $c(t_m,t_m')=1$.
In \cref{l:prod,l:prod2} (\cref{s:theory}), we show that, given these choices, the only consistent definition of $c(T,T')$ is the product $\prod_{m=1}^M c(t_m,t_m')$.
The intuition is as follows:
The representation $\Phi$ should distinguish samples $x(T)$ and $x(T')$ if, and only if, at least one of the distinctive factors in $T$ and $T'$ differs.

\paragraph{Forming a batch.}

Given $c$, the remaining challenge is to sample training batches $\mathcal{T}$ appropriately.
We start by deriving some requirements for $\mathcal{T}$ and then develop a sampling scheme that satisfies them (none of these are guaranteed by sampling $T$ and $T'$ independently).
(i) First, in order for~\cref{e:contrastive} not to be identically zero, $c(T,T')$ should be non-zero for at least \emph{some} choices of $T$ and $T'$ in the batch.
(ii) Furthermore, when $c(T,T')=1$, this should not be for the trivial case $T=T'$ (the one that SimCLR discounts by setting $w(T,T')=0$).
Based on the discussion above, the condition $c(T,T')=1~\wedge~T\not=T'$ means that all distinctive factors in $T$ and $T'$ agree and that at least an invariant factor differs. 
(iii) Additionally, for the fraction in~\cref{e:contrastive} not to be a constant, if $c(T,T')=1$, there should be another $T''$ in the batch such that $c(T,T'')=0$.
The latter means that at least one distinctive factor in $T$ and $T''$ differs.

Short of considering all possible combinations of transformations (which, as explained 
above, can be statistically inefficient), we can sample a batch $\mathcal{T}$ that satisfies these constraints as follows.
We describe this process for the case of $M=3$ transformations, but note that it extends immediately to any $M$ (this is done in~\cref{s:composition,s:batches}).
First, we sample $K_1$ versions of the first distinctive transformations $t_1$.
Then, for each $t_1$, we sample $K_2$ transformations $t_2$, also distinctive.
Finally, for each choice of $(t_1,t_2)$, we sample $K_3$ invariant transformations $t_3$.\footnote{Note that the sampling ordering is arbitrary; in particular, it needs not to be the same as the ordering in which transformations are applied to the data.}
We thus obtain a batch of $|\mathcal{T}|=K_1 K_2 K_3$ transformations.

This scheme has several desirable properties.
First, for every $T=(t_1,t_2,t_3)$, there is another $T'=(t_1,t_2,t_3')$ that agrees on the distinctive factors and differs in the invariant one (properties (i) and (ii)).
Second, there is a $T''=(t_1,t_2',t_3')$ or $T''=(t_1',t_2',t_3')$ that differs in one or more distinctive factors (property (iii)).
Third, the construction is balanced, in the sense that the number of transformations that share a particular factor value $t_m$ is the same for all values of $t_m$ (this number is $|\mathcal{T}|/(K_1\cdots K_m)$).
Furthermore, SimCLR is obtained as a special case.
Please see~\cref{l:count1,l:count2} (\cref{s:theory}) for an in-depth discussion.

\paragraph{Limitations.}

Despite the benefits, this scheme has also some limitations.
The main issue is that a difference in factor $t_m$ generally implies a difference in all subsequent factors as well, meaning that the representation may be unable to observe and thus learn to discriminate changes in all individual factors.
In~\cref{s:limitations}, we show why this is unlikely to be an issue for the practical cases considered here and in the literature.
However, we also suggest other practical cases where this \emph{can} be a significant issue, affecting even methods such as SimCLR\@.

\subsection{Properties of Generalized Data Transformations\label{s:theorems}}

In this section, we show that GDT's batch sampling strategy is statistically more efficient than naively-sampled pairs for contrastive learning. 
We do this by showing that GDT's objective has the same mean but a lower variance than sampling batches with eq. \ref{e:contrastive} directly, which would either enumerate all possible pairs of transformations (which is prohibitively expensive) or subsample it by sampling transformations independently. 
We assume that the distinctive transformations are injective. 
This must be approximately true, otherwise it is impossible for any method to be distinctive to such transformations. 
In fact, we can prove the following result:
\begin{theorem}
\label{thm:variance}Given a set of transformations $\mathcal{T}$, of which the distinctive transformations are injective, GDT is an unbiased estimate $\hat{\mathcal{L}}$ of the generalized contrastive loss (eq. \ref{e:contrastive}), i.e. $\mathbb{E}[\hat{\mathcal{L}}]=\mathcal{L}(\Phi;\mathcal{T})$. Furthermore, consider a batch of sampled compositions of $M$ transformations, with size $\prod_{j}^{M}K_{j}$, where $K_{m}$ is the number of samples for the $m$th transformation. Define $K_{I}=\prod_{j\in I}K_{j}$ and $K_{V}=\prod_{j\in V}K_{j}$, where $I$ and $V$ are the subsets of indices corresponding to invariant and distinctive transformations, respectively. Denote by $\mathcal{L}_{jj'}$ and $\sigma_{jj'}^{2}$ the mean and variance of the partial sum of the objective (eq. \ref{e:contrastive}) on the set $\mathcal{X}_{j}\times\mathcal{X}_{j'}$, with $\mathcal{X}_{j}=\{x(T^{I},T_{j}^{V}):T^{I}\in\mathcal{T}_{I}\}$, i.e. the sample pairs corresponding to distinctive transformations with indices $j$ and $j'$. Then, the variance of the GDT estimate is
\[
\mathbb{V}[\hat{\mathcal{L}}]=\frac{1}{K_{V}^{4}K_{I}^{2}}\sum_{jj'}^{K_{V},K_{V}}\sigma_{jj'}^{2}.
\]
The naive estimate's variance, on the other hand, is
\[
\mathbb{V}[\hat{\mathcal{L}}_{\textrm{d}}]=\frac{1}{K_{V}^{2}K_{I}^{2}}\sum_{jj'}^{K_{V},K_{V}}\sigma_{jj'}^{2}+\frac{1}{K_{V}^{2}}\sum_{jj'}^{K_{V},K_{V}}\left(\mathcal{L}_{jj'}-\mathcal{L}\right)^{2},
\]
which is larger by a multiplicative factor of $K_{V}^{2}$ and a further additive factor.
\, Proof: {\normalfont See Appendix A. \qed}  
\end{theorem}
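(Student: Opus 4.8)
The plan is to carry a single source of randomness throughout — the draw of the transformations that make up the batch — and to organise the computation around the cell structure induced by the distinctive factors. By the product form of the contrast function (\cref{l:prod,l:prod2}), a summand of \cref{e:contrastive} can be nonzero only when $T$ and $T'$ agree on every distinctive factor, so I group the sampled compositions into the $K_V$ cells $\mathcal{X}_1,\dots,\mathcal{X}_{K_V}$ indexed by their distinctive values, each cell holding the $K_I$ compositions that differ only in the invariant factors. Expanding the empirical objective as a double sum over pairs of cells, $\hat{\mathcal{L}}=\sum_{j,j'}\hat{\mathcal{L}}_{jj'}$, the block $\hat{\mathcal{L}}_{jj'}$ gathers the contribution of the pairs in $\mathcal{X}_j\times\mathcal{X}_{j'}$ — for the positive/numerator terms only the diagonal $j=j'$ survives, while the shared softmax denominator of an anchor in $\mathcal{X}_j$ is what ties it to a negative in $\mathcal{X}_{j'}$.

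I would establish unbiasedness first. In the GDT scheme every individual composition is still drawn from its correct marginal and every within-cell pair $(T,T')$ from its correct (product) marginal, so linearity of expectation gives $\mathbb{E}[\hat{\mathcal{L}}_{jj'}]=\mathcal{L}_{jj'}$ and hence $\mathbb{E}[\hat{\mathcal{L}}]=\sum_{jj'}\mathcal{L}_{jj'}=\mathcal{L}(\Phi;\mathcal{T})$. Injectivity of the distinctive transformations enters exactly here: it guarantees that distinct distinctive values never collapse to the same sample, so the empirical pattern of $c$ coincides with the population one and no spurious positives (which would bias the estimate) are created. For the variance I would apply the law of total variance, conditioning on the \emph{design} of the batch — which distinctive cells appear and with what multiplicities, equivalently the sampled distinctive values. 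In GDT the design is deterministic and balanced: all $K_V$ cells occur, each with exactly $K_I$ invariant replicas. Therefore the between-design term $\mathbb{V}\big[\mathbb{E}[\hat{\mathcal{L}}\mid\text{design}]\big]$ vanishes, and only the conditional variance from independently resampling the invariant factors inside the cells remains; because the sampling tree makes these draws mutually independent across cells, the cross-block covariances are zero and the conditional variance collapses to the plain sum $\tfrac{1}{K_V^{4}K_I^{2}}\sum_{jj'}\sigma_{jj'}^{2}$ once the overall normalisation $|\mathcal{T}|^{-2}=(K_VK_I)^{-2}$ and the within-cell averaging over $K_I^{2}$ invariant pairs are accounted for. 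For the naive sampler the distinctive factors are drawn independently rather than laid out on a fixed balanced grid, so the design is itself random: the law of total variance now contributes the extra additive term $\tfrac{1}{K_V^{2}}\sum_{jj'}(\mathcal{L}_{jj'}-\mathcal{L})^{2}$ — the variance of an empirical mean of the block means $\mathcal{L}_{jj'}$ over the random cells — and, because the naive batch effectively realises only $O(K_V)$ rather than all $K_V$ distinct cells, the conditional-variance term is inflated; tracking the normalising constants turns this into the stated multiplicative factor $K_V^{2}$.

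The step I expect to be the main obstacle is controlling the non-decomposability of \cref{e:contrastive}: the softmax denominator is a sum over the whole batch, so $\hat{\mathcal{L}}$ is not literally a sum of independent per-pair terms, and the block decomposition has to be arranged so that, conditionally on the design, the $\hat{\mathcal{L}}_{jj'}$ are uncorrelated across $(j,j')$ and have exactly the conditional means $\mathcal{L}_{jj'}$ and variances proportional to $\sigma_{jj'}^{2}$ claimed. I expect this to go through precisely because the only pairs feeding the numerator lie on the diagonal and, in the expansion, the denominator of an anchor in $\mathcal{X}_j$ touches one other cell $\mathcal{X}_{j'}$ at a time, so conditioning on the design renders the relevant quantities functions of disjoint, independently sampled collections of invariant transformations; once that independence is in place, both variance identities reduce to routine bookkeeping of the normalising constants, and the comparison ``larger by a factor $K_V^{2}$ and a further additive term'' follows by inspection.
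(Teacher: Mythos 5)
Your proposal follows essentially the same route as the paper's proof: GDT is read as stratified sampling over the $K_V^2$ cells indexed by pairs of distinctive transformations, unbiasedness follows from per-cell unbiasedness and linearity of expectation, and the variance comparison is the standard within-/between-stratum decomposition (your law-of-total-variance step is exactly the paper's direct second-moment computation for the naive sampler, which produces the additive $(\mathcal{L}_{jj'}-\mathcal{L})^2$ term). The only notable difference is that you flag the non-decomposable softmax denominator as the main obstacle and propose to control the cross-block correlations it induces, whereas the paper simply absorbs the normalizing sum over $T''$ into the per-pair loss $\ell$ and declares that third transformation inessential to the argument — so your extra care there goes beyond what the paper itself carries out, but does not change the approach.
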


This states that sampling data with GDT yields reduced variance, resulting in higher-quality gradients for learning the \emph{same} objective (since the estimate is unbiased), which is reflected empirically in our strong performance on numerous datasets and benchmarks. 
We note that this may apply to other methods built on the same sampling strategy but which compose transformations in different ways than GDT, as long as the requirements (i-iii) for forming a batch (sec. \ref{s:main-method}) are satisfied.

\subsection{Application to video data}
As a concrete instantiation of our framework, we consider video data and transformations of the type $T=(t_1,t_2,t_3,t_4,t_5) = (i,\tau,m,r,g)$, as shown in \cref{f:theory}, as follows.
The first component $i$ \textbf{selects} a video in the dataset.
We sample $K_i \gg 2$ indices/videos and assume distinctiveness, so that~$c(i,i') = \delta_{i=i'}$.
The second component $\tau$ contrasts different \textbf{temporal shifts}.
We sample $K_\tau=2$ different values of a delay $\tau$ uniformly at random, extracting a 1s clip $x_{i\tau}$ starting at time $\tau$.
For this contrast, we will test the distinctiveness and invariance hypotheses, as \cite{avts} indicate that the former may be preferable.
The third component $m$ contrasts \textbf{modalities}, projecting the video $x_{i\tau}$ to either its visual or audio component $m(x_{i\tau})$.
We assume invariance $c(m,m')=1$ and always sample two such transformations $m_v$ and $m_a$ to extract both modalities, so $K_m = 2$.
The fourth component $r$ contrasts \textbf{time reversal}~\citep{pickup2014seeing, wei2018learning}, which has not previously been explored in a contrastive or cross-modal setting.
This is given by a transformation $r\in\mathcal{R}=\{r_0, r_1\}$, where $r_0$ is the identity and $r_1$ flips the time dimension of its input tensor, so $K_r = 2$.
The final component $g$ applies a spatial and aural \textbf{augmentation} $x(T) = g(r(m(x_{i\tau})))$, also normalizing the data.
We assume invariance $c(g,g')=1$ and pick $K_g=1$, i.e.~augment each datum at this level in the sampling hierarchy.
These choices lead to  $K=K_i K_\tau K_m K_r K_g = 8K_i$ transformations $T$ in the batch $\mathcal{T}$ (in ablations, we also test a subset of these choices).

While we focus on modality splitting, time reversal and shift, note that we could use any transformation that can yield a useful learning signal, such as speed~\citep{benaim2020speednet, cho2020selfsupervised, wang2020self, jenni2020video, yang2020video, epstein2020oops} and temporal ordering~\citep{misra2016shuffle, clip_order, lee2017unsupervised, fernando2017self}.

\paragraph{Modality splitting.}

The modality splitting transformation $m$ is useful to capture correlation between modalities~\citep{avts,Arandjelovic17,owens2016ambient,aytar2016soundnet,wei2018learning}.
Modality splitting means that the nature of the sample $x(i,\tau,m,r,g)$ is either a sequence of frames ($m=m_v$) or a sound ($m=m_a$).
Formally, this means that $x(i,\tau,m,r,g)$ is an element of the direct sum $\mathcal{X}_v \oplus \mathcal{X}_a$ of visual and audio signals; likewise $g$, $r$ and $\Phi$ are defined on this direct sum.
In practice, this means that the transformation $g$ comprises a pair of augmentations $(g_v,g_a)$, where $g_v(v)$ extracts a fixed-size tensor by resizing to a fixed resolution of a random spatial crop of the input video $v$, and $g_a(a)$ extracts a spectrogram representation of the audio signal followed by SpecAugment~\citep{Park_2019} with frequency and time masking.
Likewise, $\Phi=(\Phi_v,\Phi_a)$ comprises a pair of neural networks, one for each modality, both valued in $\mathbb{R}^d$ (refer to~\Cref{s:appx:traindetails} for architectural details).
In the \cref{s:appx:cross-vs-within-modal}, we show that modality splitting is key for performance; thus, we extend SimCLR weight function $w$ to focus learning on only cross-modal pairs:
$w(T,T)=\delta_{i\not= i} \cdot \delta_{m\not=m'}$.

\subsection{Discussion: utility of GDT}

With our framework, we can now generalize current state of the art contrastive learning approaches such as SimCLR in a systematic and practical manner.
The theory above and in~\cref{s:theory} tells us what is the meaning of composing transformations, how a batch should be sampled and why, how this can be achieved by using a hierarchical sampling scheme that extends SimCLR, and what are the limitations of doing so.
A particular benefit is to allow to specify individually, for each transformation, if invariance or distinctiveness is sought,
whereas previous works lack this distinction and largely considered learning only invariances (SimCLR~\citep{Chen2020ASF}, AVID~\citep{morgado2020avid}), or distinctiveness (AoT~\citep{wei2018learning}) to all factors.
This property allows the flexible utilization of dataset specific transformations in the case of prior knowledge, or, as we have shown in this study, the exploration of useful signals by enumeration. Finding the best transformation signals can even be further optimized by methods such as Bayesian optimization.
Finally, compared to a direct application of previous state-of-the-art methods image-based methods such as SimCLR~\citep{Chen2020ASF}, PIRL~\cite{misra2019selfsupervised}, and MoCo~\citep{he2019momentum}, we can also seamlessly incorporate important cues such as cross-modal correlation, greatly improving downstream performance (see \cref{tab:appx:ablation:modality}).





\section{Experiments}\label{sec:experiments}

We compare self-supervised methods on pretraining audio-visual representations.
Quality is assessed based on how well the pretrained representation transfers to downstream tasks.
We conduct a study on video-audio, as well as video-text unsupervised representation learning to show the generality of our framework and then compare our best setup to the state of the art.

\paragraph{Self-supervised pretraining.}

For pretraining, we consider two standard pretraining datasets: Kinetics-400 ~\citep{kinetics} and HT100M~\citep{miech2019howto100m} and use R(2+1)D-18~\citep{tran2018closer} and a 2D ResNet~\citep{KaimingHe16} as encoders (see Appendix for further details). 
We also explore how our algorithm scales to even larger, less-curated datasets and train on IG65M~\citep{Ghadiyaram2019} as done in XDC~\citep{alwassel2019self}.


\paragraph{Downstream tasks.}
To assess the pretrained representation $f_v$, we consider standard action recognition benchmark datasets, UCF-101~\citep{UCF101} and HMDB-51~\citep{kuehne2011hmdb}.
We test the performance of our pretrained models on the tasks of finetuning the pretrained representation, conducting few-shot learning and video action retrieval.
The full details are given in the Appendix.

\subsection{Analysis of generalized data transformations}\label{s:exp.hypothesis}

\newcommand{\vv}{{\bf{d}}}%
\newcommand{\ii}{{\bf{i}}}%
\newcommand{\cc}{{$\cdot$}}%
\begin{table}
\centering
    \caption{\textbf{Learning hypothesis ablation.} Results on action classification performance on HMDB-51 is shown for finetuning accuracy (Acc) and frozen retrieval (recall@1) after pretraining on Kinetics-400 for 50 epochs.
    GDT can leverage signals from both \ul{i}nvariance and stronger \ul{d}istinctiveness transformation signals.
    We consider data-sampling (DS), time-reversal (TR) and time-shifting (TS).
    \label{tab:hypotheses_results}}
      \begin{tabular}{c cccc @{\hskip 5pt} cc }
      \toprule
         & DS & TR & TS & Mod. & $\mathbf{Acc.}$ & $\mathbf{R@1}$ \\
        \midrule
        \multicolumn{7}{l}{\textit{SimCLR-like: DS-distinctiveness only}}      \\
        (a) & \vv & \cc     &  \cc   &   V    & $44.6$ & $11.8$    \\ 
        (b) & \vv & \ii     &  \cc   &   V    & $36.9$ & $13.3$   \\ 
        (c) & \vv & \cc     &  \ii   &   V    & $35.9$ & $15.3$    \\ 
        (d) & \vv & \ii     &  \ii   &   V    & $37.8$ & $13.9$    \\ 
        \midrule
        \midrule
        \multicolumn{7}{l}{\textit{Cross-modal}}\\
        (e) & \vv & \cc & \cc  & AV  & $52.4$ & $21.8$   \\
        (f) & \vv & \ii & \cc  & AV  & $58.8$ & $22.6$   \\
        (g) & \vv & \cc & \ii  & AV  & $57.4$ & $23.5$    \\
        (h) & \vv & \ii & \ii  & AV  & $59.9$ & $24.8$    \\ 
        \midrule
        \multicolumn{7}{l}{\textit{Cross-modal +1 distinctive factor}} \\
        (i) & \vv & \vv & \cc  & AV  & $57.8$ & $26.1$ \\
        (j) & \vv & \cc & \vv  & AV  & $58.7$ & $22.1$  \\
        (k) & \vv & \vv & \ii  & AV  & $61.1$ & $25.4$ \\
        (l) & \vv & \ii & \vv  & AV &  $\bf{61.4}$ & $\bf{27.1}$ \\
        \hline
        \multicolumn{7}{l}{\textit{Cross-modal + 2 distinctive factors}} \\
        (m) & \vv & \vv & \vv  & AV & $57.2$ & $20.5$  \\ %
        \bottomrule
    \end{tabular}
\end{table}

In this section, we conduct an extensive study on each parameter of the GDT transformation studied here, $T = (i, \tau, m, r, g)$, and evaluate the performance by finetuning our network and conduncting video retrieval on the HMDB-51 action recognition benchmark.

\paragraph{SimCLR-like baseline.}

First, we use the framework to test a direct extension of SimCLR to video data, as shown in \Cref{tab:hypotheses_results}(a)-(d).
By this, we mean utilizing only the visual modality (V), and only invariance to transformations, which is standard in all recent self-supervised methods~\citep{Wu_2018_CVPR,he2019momentum,Chen2020ASF}.
For this, we consider GDTs of the type $T = (i, m_, \tau, r, g)$ described above and set $K_i=512$ (the largest we can fit in our setup).
In row (a), we pick only the video modality ($m=m_v$ so $K_m=1$).
We also sample a single shift $\tau$ (so $K_\tau=1$), which results in data augmentation but does not learn shift invariance, and no time reversal $r=1$ (so $K_r=1$) --- these are denoted with a $\cdot$ in the table.
However, we do sample two visual augmentations $g$ ($K_g=2$), emulating SimCLR and learning invariance to that factor.
We also set all transformation components to invariance ($c(t_m,t'_m)=1$) except the first that does sample selection.
In row (b-d) we also experiment with adding invariance to \emph{time shift} (TS) and \emph{time reversal} (TR), by setting $K_\tau=2$ and $K_r=2$.
We find that doing so consistently degrades the finetuning accuracy performance, but increases the retrieval performance somewhat, indicating that the model is not able to fully leverage these augmentation signals in a meaningful way.

\paragraph{Cross-modal learning.}

Next, in rows (e-h) we repeat this experiment, but using both audio-visual modalities (AV) by setting $K_m=2$.
In this case, as explained above, we set the weight $w$ to only consider cross-modal interactions and set $K_g=1$.
We note two facts:
First, the performance increases substantially (+7.8\% (e) vs (a-d)).
Second, now TS and TR invariance leads to significant improvements (up to +7.5\%).

\paragraph{Invariance vs distinctiveness.}

Next, in rows (i-l), we explore the effect of being invariant or distinctive to individual transformations, which is unique to our method.
Comparing row (h) to rows (k) and (l), we see that switching to distinctiveness to one of TS or TR further improves performance (up to +1.5\%).
On the other hand, `ignoring' either ($\cdot$ symbols in lines (g) and (j)) is worse than learning invariance ((h) and (l)), with a difference of around 2.5\%.
Finally, in row (m) we find that being distinctive for both TS and TR at the same time is worse, suggesting that a mix of distinctiveness and invariance is preferable.
This is particularly true for the retrieval metric (column R@1).

\subsection{Textual modality}\renewcommand\thesubtable{\Alph{subtable}}

\begin{table}[t] 
\centering
\caption{\textbf{GDT on video-text HT100M dataset}. We also find the positive effect of including more modalities and find non-trivial combinations of beneficial transformations previously unexplored.}\label{t:text}
	{\centering
    \label{tab:queryd-av}
      \begin{tabular}{ccccc  c}
      \toprule
        & DS & TR & TS & Mod. & \textbf{Acc} \\ 
        \midrule
        \multicolumn{6}{l}{\textit{SimCLR-like}} \\
        (a) & \vv & \cc & \cc  & V & $36.1$ \\ 
        \midrule
         \multicolumn{6}{l}{\textit{Video-text cross-modal}} \\
        (b) & \vv & \cc & \cc  & VT  & $59.2$ \\ 
        (c) & \vv & \vv & \cc  & VT  & $61.5$  \\ 
        (d) & \vv & \cc & \vv  & VT  & $62.9$  \\ 
        (e) & \vv & \vv & \ii  & VT  & $63.8$    \\ 
        (f) & \vv & \ii & \vv  & VT  & $\bf{64.4}$    \\ 
        (g) & \vv & \vv & \vv  & VT  & $\bf{64.4}$    \\ 
        \bottomrule
    \end{tabular}
}
\end{table}


In~\cref{t:text}, we demonstrate the generality of our approach by using ASR captions as an alternative modality (instead of audio) for the HowTo100M dataset~\citep{miech2019howto100m}. 
For the text encoder, we use a simple Word2Vec~\citep{mikolov2013efficient} embedding with a MLP (further details are provided in the Appendix).
Comparing \cref{t:text}(a) with (b), we find that switching from SimCLR to a cross-modal baseline increases performance by more than +22\%. 
Furthermore, we find gains of 3.7\% when switching from data-sampling distinctiveness only (row (b)) to incorporating further distinctivenesses (rows c-d). 
Finally, we find that -- as in the video-audio case -- combining time-shift distinctiveness with time-reversal invariance leads to particularly strong representations (row (f)), yielding benefits of over +5\% compared to data-sampling distinctiveness alone.
Compared to video-audio learning (\cref{tab:hypotheses_results}(m)), we find the case of distinctive-only for video-text learning (\cref{tab:queryd-av}(g)) to be highly competitive, highlighting the need to explore the set of possible transformation signals to achieve the best downstream performance.

\paragraph{Intuition.}
While we only analyse a subset of possible transformations for video data, we nevertheless find consistent signals across both video-audio and video-text learning:
Inclusion of further distinctivenesses to TS and TR always improve upon the basecase and the best setup is achieved for TS distinctiveness and TR invariance. 
One explanation for this might be that there is useful signal in both of these transformations that are not captured by previous ``augmentation-only'' naive noise-contrastive formulations.
For example, for time-shift (TS), the model profits from having to differentiate different points in time, e.g. between an athlete running vs an athlete landing in a sandpit, which could be both in the same video. 
This intuitively serves as a hard negative for the model, increasing its discriminative power.
For time reversal (TR), many actions depicted such as moving an object are inherently invariant to reversing time, as shown in~\citep{price2019_RetroActionsLearning}, therefore yielding a gain when used as an augmentation. 
In ~\citep{wei2018learning}, they show that humans have a $20\%$ error-rate when classifying a video’s direction of time in Kinetics-400, thus demonstrating that Kinetics-400 has subsets of videos that look realistic even when reversed. 
These findings that additional distinctiveness combined with invariances improve video representation learning are noteworthy, as they contradict results from the image self-supervised learning domain, where learning pretext-invariance can lead to more transferable representations~\citep{misra2019selfsupervised}. 
Even when compared to previous self-supervised learning approaches for video-data, such as predicting the arrow of time~\citep{wei2018learning}, our method yields new insights by showing that a unique combination of distinctivenesses and invariances performs best, at least on the training sets considered.
Combining these points, the strong performance of GDT is founded in its ability to leverage highly informative, yet ``free'', signal that we have from construction of the transformations.

\subsection{Qualitative analysis}
\begin{figure}[!t]
    \centering
\includegraphics[width=0.9\columnwidth]{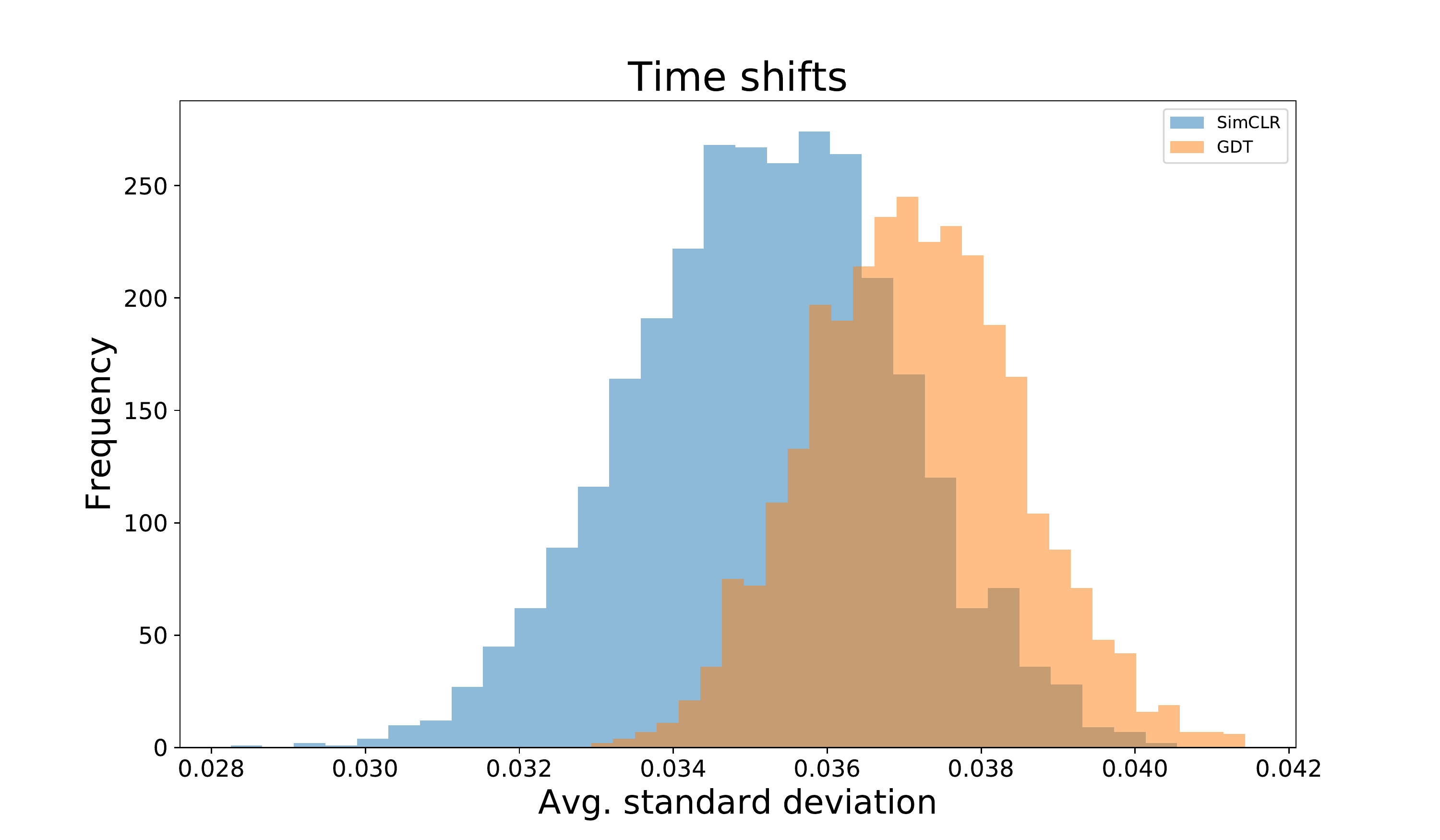}
    \caption{Learning distinctiveness to time-shifts: our GDT model from Tab.1(j) is able to differentiate features from the same video at different times better than a simple SimCLR variant (Tab.1(a)).
    \label{fig:histogram}}
\end{figure}
Here, we study what effect the different transformations we let our model be invariant and distinctive to have on our learned representations.
For this, we compare against the SimCLR baseline of Tab.1(a) and compare the average standard deviation of the normalized features for 10 time-shifted clips per video for 3000 randomly selected Kinetics-400 validation set.

\subsection{Comparison to the state of the art}

\begin{table}[t]
\begin{minipage}[t]{0.5\textwidth}
\centering
\centering
\midsepremove
\caption{\textbf{Video retrieval and Few Shot Learning.} Retrieval accuracy in (\%) via nearest neighbors at various levels of neighborhood sizes and few shot learning accuracy (\%) via  a $k$-nearest neighbor on frozen representations. \label{tab:few-shot-ret} }
\vspace{1mm}
\begin{tabular}{c l c c @{\hskip 7pt} c c}
\toprule
& & \multicolumn{2}{c}{\textbf{HMDB}} & \multicolumn{2}{c}{\textbf{UCF}} \\
& & 1 & 5 & 1 & 5   \\
\midrule
\parbox[t]{2mm}{\multirow{3}{*}{\rotatebox[origin=c]{90}{\textit{kNN}}}}  
&\hphantom{x} Random  & 3.0  & 3.5  & 2.3  & 4.6     \\
&\hphantom{x}  3DRot~\citep{jing2018self} & -- & --   & 15.0 & 31.5  \\
\cmidrule{2-6}
& \hphantom{x} \textbf{GDT (ours)} & \bf{14.3} & \bf{15.4}  & \bf{26.7} & \bf{44.6}  \\
\midrule
&\hphantom{x} SP-Net~\citep{benaim2020speednet}                 & - & - & 13.0 & 28.1   \\
&\hphantom{x} VCP~\citep{luo2020video}                   & 7.6 & 24.4 & 18.6 & 33.6 \\
&\hphantom{x} M-DPC~\citep{Han_2020}                             & 7.7 & 25.7 & 20.2 & 40.4 \\
\parbox[t]{2mm}{\multirow{3}{*}{\rotatebox[origin=c]{90}{\textit{Retrieval}}}}&\hphantom{x} 
VSP~\citep{cho2020selfsupervised}                   & 10.3 & 26.6 & 24.6 & 41.9 \\
&\hphantom{x} CoCLR~\citep{Han2020SelfsupervisedCF}               & 23.2 & 43.2 & 53.3 & 69.4   \\
&\hphantom{x} SeLaVi~\citep{asano2020labelling}                   & 24.8 & 47.6 & 52.0 & 68.6    \\
\cmidrule{2-6}
&\hphantom{x} \textbf{GDT (ours)} & \textbf{26.1} & \textbf{51.7} & \textbf{62.8} & \textbf{79.0}  \\
\bottomrule
\end{tabular}
\midsepdefault

\end{minipage}
\hfill
\end{table}

\begin{table}[t]
	\caption{\textbf{State-of-the-art on video action recognition with full-finetuning.} Self- and fully-supervisedly trained methods on UCF-101 and HMDB-51 benchmarks.
	\labelswitch{tab:sota}
	}
    \centering
\begin{tabular}{l l @{\hskip 5pt} c c }
		\toprule
		\textbf{Method}                              & \bf{Data}  & 
		\multicolumn{2}{c}{\textbf{Top-1 Acc\%}} \\
		                                             &                           & HMDB\,        & UCF          \\
		\midrule 
      {\color{gray}Supervised~\citep{xie2018rethinking}} & {\color{gray}K-400+IN} & {\color{gray}75.9} & {\color{gray} 96.8} \\
		{\color{gray}Supervised~\citep{alwassel2019self}}    & {\color{gray}K-400}              & {\color{gray}65.1}          & {\color{gray}94.2}         \\   
		\specialrule{.4pt}{1pt}{1pt} 
		AoT~\citep{wei2018learning}                  & K-400              & -             & 79.4         \\  
		MultiSensory~\citep{owens2018audio}          & K-400              & -             & 82.1         \\
		SeLaVi~\citep{asano2020labelling}            & K-400              & 47.1          & 84.2         \\  
		PEMT~\citep{lee2021parameter}                & K-400             & --             & 85.2         \\ 
		XDC~\citep{alwassel2019self}                 & K-400              & 52.6          & 86.8         \\  
		AV Sync+RotNet~\citep{xiao2020audiovisual}   & K-400              & 54.6          & 87.0         \\  
		CoCLR~\citep{Han2020SelfsupervisedCF}        & K-400              & 54.6         & 87.9\\
		SeCo~\citep{yao2021seco}                     & K-400              & 55.6          & 88.3         \\  
		AVTS~\citep{avts}                            & K-400              & 56.9          & 85.8         \\  
		CPD~\citep{li2020learning}                   & K-400              & 57.7          & 88.7        \\  
		AVID~\citep{morgado2020avid}                 & K-400              & 60.8          & 87.5         \\  
		CM-ACC~\citep{ma2021active}                  & K-400              &  61.8         & 90.2         \\
		GLCM~\citep{ma2021contrastive}               & K-400              &  61.9         & 91.2 \\
		\bf{\methodname~(ours)}                      & K-400              & \bf{62.3}    & \bf{90.9}     \\
		 \specialrule{.4pt}{1pt}{1pt}
		 MIL-NCE~\citep{miech2019endtoend}         & HT100M                 & 61.0          & 91.3         \\
		 \bf{\methodname~(ours)}                    & HT100M            & \bf{67.4}     & \bf{94.1}    \\ 
		 \specialrule{.4pt}{1pt}{1pt}
		  XDC~\citep{alwassel2019self}                & IG65M                     & 68.9          & \bf{95.5}         \\  
		 \bf{\methodname~(ours)}                     & IG65M                     & \bf{72.8}     & \ul{95.2}    \\   
		\bottomrule       
	\end{tabular}

\end{table}

Given one of our best learning setups from~\cref{s:exp.hypothesis} (row (l)), we train for longer and compare our feature representations to the state of the art on standard downstream benchmarks. 

\subsubsection{Downstream benchmarks}

For \textbf{few-shot classification}, as shown in ~\cref{tab:few-shot-ret}, we significantly beat the 3D-Rotnet~\citep{jing2018self} baseline on UCF-101 by more than $10\%$ on average for each shot with our Kinetics-400 pretrained model.

For \textbf{video retrieval}, we report recall at 1 and 5  retrieved samples for split-1 of the HMDB-51 and UCF-101 datasets in~\cref{tab:few-shot-ret}.
Using our model trained on Kinetics-400, \methodname~significantly beats all other self-supervised methods. 
In particular, we outperform CoCLR~\citep{Han2020SelfsupervisedCF}, a recent state-of-the-art self-supervised method, that uses optical flow as another view to mine hard positive to improve instance discrimination learning for video representations. 
Moreover, we surpass SeLaVi, an audio-visual clustering and representation learning method, by $2\%$ and $10\%$ on average on recall at 1 and 5 for HMDB-51 and UCF-101.  

For \textbf{video action recognition}, we finetune our \methodname~pretrained network for UCF-101 and HMDB-51 video classification, and compare against state-of-the-art self-supervised methods in~\cref{tab:sota}.
When pretrained on the Kinetics datasets, we find that our \methodname~pretrained model achieves very good results, outperforming all recent methods. 
In particular, we outperform audio-visual pretraining methods, AVTS~\citep{avts}, SeLaVi~\citep{asano2020labelling} and XDC~\citep{alwassel2019self}, by large margins using the same architecture (R(2+1)D-18) and dataset (Kinetics-400), showing the effectiveness of our GDT pre-training approach.
We also surpass AVID~\cite{morgado2020avid}, the state-of-the-art audio-visual representation learning method, by $1.5\%$ on HMDB-51 and $3.8\%$ on UCF-101. 
AVID uses a variant of the pre-training scheme of our baseline approach that extends noise contrastive learning to the audio-visual domain as in \Cref{tab:hypotheses_results}, row (e). 
However, while AVID simply encodes sample distinctiveness and invariance to modality in its visual representations, we are able to encode invariances and distinctiveness to additional transformations, which significantly improves our performance. 
Our approach is also more sample efficient, as we are able to achieve our results with 300 less epochs of training. 
Finally, when pretrained on HT100M, we achieve strong gains of $+6.4\%$ on HMDB-51 and $+2.8\%$ on UCF-101 compared to the state-of-the-art video text method, MIL-NCE~\cite{miech2019endtoend}. 
Similar to AVID, MIL-NCE uses a variant of the baseline cross-modal contrastive framework to learn representations, while we are able to improve upon this baseline by learning invariance and distinctiveness to additional transformations such as time reversal and time shift. 
Moreover, with HT100M pre-training, we outperform the Kinetics supervised baseline using the same architecture when finetuned on HMDB-51 ($67.4$ vs $65.1$) and are on par for UCF-101 ($94.1$ vs $94.2$). 
We further show the scalability and flexibility of our GDT framework by pretraining on the IG65M dataset~\citep{Ghadiyaram2019}.
With this, our visual feature representation sets a new state of the art among all self-supervised methods, particularly by a margin of $> \!4\%$ on the HMDB-51 dataset.
On UCF-101, we set similar state-of-the-art performance with XDC.
Along with XDC, we beat the Kinetics supervised pretraining baseline using the same architecture and finetuning protocol.

    


\section{Conclusion}

We introduced the framework of Generalized Data Transformations (GDTs), which allows one to capture, in a single noise-contrastive objective, cues used in several prior contrastive and non-contrastive learning formulations, as well as easily incorporate new ones.
The framework shows how new meaningful compositions of transformations can be obtained, encoding valuable invariance and distinctiveness that we want our representations to learn.
Following this methodology, we achieved state-of-the-art results for self-supervised pretraining on standard downstream video action recognition benchmarks, even surpassing supervised pretraining.
Overall, our method significantly increases the expressiveness of contrastive learning for self-supervision, making it a flexible tool for many multi-modal settings, where a large pool of transformations exist and an optimal combination is sought. 




\paragraph{Acknowledgements.}
We are grateful for support from the Rhodes Trust (M.P.), Facebook (M.P.), EPSRC Centre for Doctoral Training in Autonomous Intelligent Machines \& Systems [EP/L015897/1] (M.P. and Y.A.), the Qualcomm Fellowship (Y.A.), the Open Philanthropy Project (R.F.), and the Royal Academy of Engineering under the Research Fellowship scheme (J.F.H.).
We also thank Andrew Owens, Alexei Efros, Triantafyllos Afouras, Gül Varol and Tengda Han for helpful discussions and feedback.
Lastly, the authors would also like to thank Ishan Misra and Bruno Korbar from Facebook for valuable discussions and sharing insights.

{\bibliographystyle{ieee_fullname}\bibliography{shortstrings,refs,vedaldi_general,vedaldi_specific}}

\def\optrow#1\\{#1\\} 
\def\labelswitch#1{\label{supp:#1}}
\def\dontshowthisinappendix#1{}
\def\showthisinappendix#1{#1}

\clearpage
\appendix
\counterwithin{figure}{section}
\counterwithin{table}{section}
\section{Appendix}

\subsection{Foundations of compositional contrastive learning}\label{s:theory}

In this section, we develop more formally a basic theory of compositional contrastive learning formulation, providing rigorous grounds for the approach described in~\cref{s:method-new}.

Consider the problem of learning a function $f : \mathcal{X}\rightarrow \mathcal{Y}$.
In a contrastive setting, we are not given information about the values of $f$; instead, we are given a \emph{contrast function}
$
c: \mathcal{X}\times \mathcal{X}\rightarrow \{0,1\}
$
which only tells for which pairs of points $x_1$ and $x_2$ $f$ is the same and for which it differs:

\begin{definition}\label{d:compat}
  The function $f$ is \emph{compatible} with the contrast $c$ if, and only if, for all $x_1,x_2\in\mathcal{X}$:
  $$
  c(x_1,x_2)=\delta_{f(x_1)=f(x_2)}.
  $$
\end{definition}

A contrast function cannot be arbitrary:

\begin{lemma}
The predicate $c(x_1,x_2)=1$ is an equivalence relation if, and only if, there exists a function $f$ compatible with $c$.
\end{lemma}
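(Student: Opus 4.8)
The plan is to prove the two implications separately; both amount to unwinding \cref{d:compat}, and the only content is the standard bijection between equivalence relations and "equal-value" relations of functions.

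For the direction "compatible $f$ exists $\Rightarrow$ equivalence relation", I would assume $f$ is compatible with $c$, so that $c(x_1,x_2)=1$ holds exactly when $f(x_1)=f(x_2)$. Thus the relation $R=\{(x_1,x_2)\in\mathcal{X}\times\mathcal{X}: c(x_1,x_2)=1\}$ coincides with the kernel relation of $f$. Reflexivity, symmetry and transitivity of $R$ then follow from the corresponding properties of equality on $\mathcal{Y}$: $f(x)=f(x)$; $f(x_1)=f(x_2)\Rightarrow f(x_2)=f(x_1)$; and $f(x_1)=f(x_2),\,f(x_2)=f(x_3)\Rightarrow f(x_1)=f(x_3)$. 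These are three one-line verifications.

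For the converse, I would assume $R$ is an equivalence relation on $\mathcal{X}$, form the quotient set $\mathcal{X}/R$ of equivalence classes, and take $f$ to be the canonical projection $x\mapsto[x]_R$ (valued in $\mathcal{Y}=\mathcal{X}/R$, or, if the codomain is pre-specified, composed with any injection $\mathcal{X}/R\hookrightarrow\mathcal{Y}$, which exists whenever $\mathcal{Y}$ has at least as many elements as there are classes). Then $f(x_1)=f(x_2)\iff[x_1]_R=[x_2]_R\iff(x_1,x_2)\in R\iff c(x_1,x_2)=1$, i.e. $c(x_1,x_2)=\delta_{f(x_1)=f(x_2)}$, so $f$ is compatible with $c$ by \cref{d:compat}.

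There is no real obstacle here; the statement is essentially definitional. The only points requiring a little care are (a) making explicit that "$c(x_1,x_2)=1$ is an equivalence relation" refers to the binary relation $R$ defined by that predicate (so in particular reflexivity gives $c(x,x)=1$ for all $x$, which is used implicitly), and (b) the mild set-theoretic remark about the codomain in the converse direction, which can be dispatched in one sentence.
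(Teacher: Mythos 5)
Your proof is correct and matches the paper's argument exactly: one direction follows because the kernel relation of any function inherits reflexivity, symmetry, and transitivity from equality on $\mathcal{Y}$, and the converse is witnessed by the canonical projection onto the quotient $\mathcal{X}/c$. Your added remark about embedding the quotient into a pre-specified codomain is a harmless refinement the paper sidesteps by simply taking $\mathcal{Y}=\mathcal{X}/c$.
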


\begin{proof}
If $c(x_1,x_2)=1$ defines an equivalence relation on $\mathcal{X}$,
then such a function is given by the projection on the quotient $\hat f : \mathcal{X} \rightarrow \mathcal{X}/c = \mathcal{Y}$.
On the other hand, setting $c(x_1,x_2)= \delta_{f(x_1)=f(x_2)}=1$ for any given function $f$ is reflexive, symmetric and transitive because the equality $f(x_1)=f(x_2)$ is.
\end{proof}

\begin{definition}\label{d:admiss}
  The contrast function $c$ is admissible if, and only if, $c(x_1,x_2)=1$ defines an equivalence relation.
\end{definition}

Full knowledge of the contrast function $c$ only specifies the level sets of the function $f$:

\begin{lemma}
Let $f$ be any function compatible with the admissible contrast $c$.
Then, we can write $f=\iota \circ \hat f$ as the composition of an injection
$
\iota : \mathcal{X}/f \rightarrow \mathcal{Y}
$
and the (unique) projection
$
\hat f: \mathcal{X} \rightarrow \mathcal{X}/c
$
of $\mathcal{X}$ onto the equivalence classes $\mathcal{X}/c$ of the equivalence relation $c(x_1,x_2)=1$.
\end{lemma}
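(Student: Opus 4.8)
The plan is to construct $\iota$ as the map induced by $f$ on the quotient and to check that the two directions of the compatibility biconditional (\cref{d:compat}) supply, respectively, well-definedness and injectivity of $\iota$.

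First, since $c$ is admissible (\cref{d:admiss}), the relation $x_1\sim x_2 \Leftrightarrow c(x_1,x_2)=1$ is an equivalence relation, so the set $\mathcal{X}/c$ of equivalence classes exists and $\hat f : \mathcal{X}\to\mathcal{X}/c$, $x\mapsto [x]_c$, is the canonical surjection onto it; this is the unique projection named in the statement, determined by the universal property of the quotient. Because $c(x_1,x_2)=\delta_{f(x_1)=f(x_2)}$, the classes of $\sim$ are exactly the level sets of $f$, so $\mathcal{X}/c$ may be identified with the partition $\mathcal{X}/f$ of $\mathcal{X}$ into level sets.

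Next, I would define $\iota : \mathcal{X}/c \to \mathcal{Y}$ by $\iota([x]_c) = f(x)$. For well-definedness, suppose $[x_1]_c = [x_2]_c$; then $c(x_1,x_2)=1$, so by compatibility $\delta_{f(x_1)=f(x_2)}=1$, i.e.\ $f(x_1)=f(x_2)$, and the value is independent of the chosen representative. By construction $\iota(\hat f(x)) = \iota([x]_c) = f(x)$ for every $x\in\mathcal{X}$, hence $f = \iota\circ\hat f$. For injectivity, suppose $\iota([x_1]_c)=\iota([x_2]_c)$, i.e.\ $f(x_1)=f(x_2)$; then $\delta_{f(x_1)=f(x_2)}=1$, so compatibility gives $c(x_1,x_2)=1$, whence $[x_1]_c=[x_2]_c$.

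The only point requiring care — the main obstacle, such as it is — is to use \emph{both} implications in \cref{d:compat}: $c=1\Rightarrow f$ agrees (needed for $\iota$ to be well-defined) and $f$ agrees $\Rightarrow c=1$ (needed for $\iota$ to be injective). Beyond this there is no genuine difficulty: the statement is precisely the canonical factorization of $f$ through its kernel pair, which here coincides with $c$ by compatibility, so surjectivity of $\hat f$ and injectivity of $\iota$ are automatic once the two implications are invoked.
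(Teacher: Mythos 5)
Your proposal is correct and follows essentially the same route as the paper: the paper invokes the canonical factorization of $f$ through its level sets ("elementary algebra") and then identifies $\mathcal{X}/f$ with $\mathcal{X}/c$ via compatibility, whereas you write out that factorization explicitly, using the two directions of \cref{d:compat} for well-definedness and injectivity of $\iota$ respectively. The only difference is level of detail, and your explicit observation that both implications of compatibility are needed is a useful clarification of what the paper leaves implicit.
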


\begin{proof}
From elementary algebra, we can decompose any function $f : \mathcal{X}\rightarrow\mathcal{Y}$ as
$$
   f : \mathcal{X}
   \overset{\hat f}{\longrightarrow}  \mathcal{X}/f
   \overset{\iota }{\longrightarrow}  \mathcal{Y}
$$
where $\iota$ is an injective function and $\hat f$ projects $\mathcal{X}$ to the quotient $\mathcal{X}/f$, i.e.~the collection of subsets $X \subset \mathcal{X}$ where $f(x)$ is constant (level sets).
The latter are also the equivalence classes of the relation $f(x_1)=f(x_2)$.
Due to~\cref{d:compat}, this is the same equivalence relation given by the contrast $c$, so that $\mathcal{X}/f = \mathcal{X}/c$.
\end{proof}

Note that, in our contrastive learning formulation, we do \emph{not} define the contrast $c$ on the sample space $\mathcal{X}$, but rather on the transformation space $\mathcal{T}$.
The following lemma suggests that defining a contrast $c(T,T')$ on transformations instead of data samples is usually acceptable:

\begin{lemma}
Let $c:\mathcal{T}\times\mathcal{T}\rightarrow \{0,1\}$ be an admissible contrast function defined on a set (e.g., a batch) of generalized data transformations $\mathcal{T}$.
Furthermore, let $x$ be a dataset and let $x(T)\in\mathcal{X}$ be the sample indexed by transformation $T$.
If $x(T) = x(T') \Rightarrow T = T'$ (i.e.~different transformations output different samples), then setting $\tilde c(x(T), x(T'))=c(T,T')$ defines part of an admissible sample contrast function $\tilde c$ on $\mathcal{X}$.
\end{lemma}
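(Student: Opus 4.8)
The plan is to exploit the injectivity hypothesis directly. Since $x(T)=x(T')\Rightarrow T=T'$, the assignment $T\mapsto x(T)$ is injective on $\mathcal{T}$, hence restricts to a bijection $\beta:\mathcal{T}\to x(\mathcal{T})$ onto the image $x(\mathcal{T})\subseteq\mathcal{X}$. I would then prove the statement in three steps: (a) $\tilde c$ is well defined as a function on $x(\mathcal{T})\times x(\mathcal{T})$; (b) the predicate $\tilde c(\cdot,\cdot)=1$ is an equivalence relation on $x(\mathcal{T})$; (c) this equivalence relation extends to one on all of $\mathcal{X}$, so that by the earlier lemma (a compatible $f$, equivalently an admissible contrast, exists iff the $1$-level set is an equivalence relation) it is induced by an admissible contrast $\tilde c:\mathcal{X}\times\mathcal{X}\to\{0,1\}$ whose restriction to $x(\mathcal{T})\times x(\mathcal{T})$ returns the prescribed values $c(T,T')$ --- which is precisely the meaning of ``defines part of an admissible sample contrast function''.

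For step (a): given $y_1,y_2\in x(\mathcal{T})$, there are unique $T_i=\beta^{-1}(y_i)\in\mathcal{T}$ with $x(T_i)=y_i$, the uniqueness being exactly the injectivity hypothesis; thus $\tilde c(y_1,y_2):=c(T_1,T_2)$ depends on $y_1,y_2$ alone, not on any representing transformation. This is the one place the hypothesis is genuinely needed, and it is the main (if mild) obstacle: without injectivity, two distinct transformations $T\neq\bar T$ with $x(T)=x(\bar T)$ could satisfy $c(T,T')\neq c(\bar T,T')$ for some $T'$, leaving $\tilde c$ ambiguous.

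Step (b) is a routine transport of structure along the bijection $\beta$. By admissibility of $c$, the predicate $c(\cdot,\cdot)=1$ is an equivalence relation on $\mathcal{T}$; pulling it back through the well-defined map $\beta^{-1}$ gives reflexivity ($\tilde c(y,y)=c(\beta^{-1}(y),\beta^{-1}(y))=1$), symmetry (from symmetry of $c$), and transitivity (from transitivity of $c$, since $y_1\sim y_2$ and $y_2\sim y_3$ translate to $\beta^{-1}(y_1)\sim\beta^{-1}(y_2)\sim\beta^{-1}(y_3)$). Hence $\tilde c(\cdot,\cdot)=1$ is an equivalence relation on $x(\mathcal{T})$.

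For step (c), I would take the partition of $x(\mathcal{T})$ into the equivalence classes constructed in (b) and enlarge it to a partition of $\mathcal{X}$ by adjoining, say, the singletons $\{y\}$ for every $y\in\mathcal{X}\setminus x(\mathcal{T})$ (any partition of the complement works). This partition defines an equivalence relation on $\mathcal{X}$; by the earlier lemma it is the $1$-level set of an admissible contrast $\tilde c$ on $\mathcal{X}$, and by construction $\tilde c(x(T),x(T'))=c(T,T')$ for all $T,T'\in\mathcal{T}$. This completes the argument; the only non-bookkeeping point is the well-definedness in step (a), which is where the injectivity of the distinctive transformations earns its keep.
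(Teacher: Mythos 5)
Your proposal is correct and follows essentially the same route as the paper's proof: injectivity of $T\mapsto x(T)$ makes $\tilde c$ well defined on the image $\{x(T):T\in\mathcal{T}\}$, and reflexivity, symmetry and transitivity are inherited from $c$. Your step (c), explicitly extending the equivalence relation to all of $\mathcal{X}$ via singletons, is a small elaboration the paper leaves implicit in the phrase ``defines part of,'' but it adds nothing substantively different.
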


\begin{proof}
The expression above defines $\tilde c$ on the sample space
$
\tilde{\mathcal{X}}=\{x(T): T\in\mathcal{T}\} \subset \mathcal{X}.
$
Reflexivity, symmetry and transitivity are inherited from $c$.
However, if the same data point $x(T)=x(T')$ can be obtained from two different transformations $T$ and $T'$, the definition is ill posed.
The hypothesis in the lemma guarantees that this is not the case.
\end{proof}

\subsubsection{Compositional transformations}\label{s:composition}

Next, we consider the case in which $T = (t_1,\dots,t_M)$ is a composition of individual transformations $t_m$, each with its own contrast $t_m$:

\begin{definition}
  We say that a contrast function $c(t_m,{t'}_m)$ is distinctive if it is given by $\delta_{t_m={t'}_m}$. We say that it is invariant if it is identically one.
\end{definition}

The following lemma provides a formula for the overall contrast function $c(T,T)$ given the contrasts for the individual factors.

\begin{lemma}\label{l:prod}
Let $c(t_m,t_m')=1$ be admissible contrast functions, either distinctive or invariant.
Then, the product $c(T,T') = \prod_{m=1}^Mc(t_m,t_m')$ is also admissible.
\end{lemma}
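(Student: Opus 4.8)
The plan is to show that the product contrast $c(T,T') = \prod_{m=1}^M c(t_m,t_m')$ is admissible, which by \cref{d:admiss} means verifying that the predicate $c(T,T')=1$ defines an equivalence relation on the transformation batch $\mathcal{T}$. Since each factor contrast $c(t_m,t_m')$ takes values in $\{0,1\}$, the product equals $1$ if and only if every factor equals $1$; that is, $c(T,T')=1 \iff \forall m:\ c(t_m,t_m')=1$. So the product predicate is precisely the conjunction (intersection) of the $M$ factor predicates, and the task reduces to showing that a finite intersection of equivalence relations is an equivalence relation.

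First I would record that each factor predicate is itself an equivalence relation: if $t_m$ is distinctive then $c(t_m,t_m')=\delta_{t_m=t_m'}$ is just equality, which is reflexive, symmetric and transitive; if $t_m$ is invariant then $c(t_m,t_m')\equiv 1$, the total relation, which is trivially an equivalence relation. (Alternatively, invoke the first unnamed lemma of \cref{s:theory}: each is admissible by hypothesis, hence an equivalence relation.) Then I would verify the three properties for the conjunction directly. Reflexivity: $c(T,T)=\prod_m c(t_m,t_m)=\prod_m 1 = 1$. Symmetry: $c(T',T)=\prod_m c(t_m',t_m)=\prod_m c(t_m,t_m')=c(T,T')$, using symmetry of each factor. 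Transitivity: if $c(T,T')=1$ and $c(T',T'')=1$, then for every $m$ both $c(t_m,t_m')=1$ and $c(t_m',t_m'')=1$, so by transitivity of the $m$-th factor relation $c(t_m,t_m'')=1$; taking the product over $m$ gives $c(T,T'')=1$.

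That completes the argument; there is no real obstacle here, since the statement is essentially the elementary fact that intersections of equivalence relations are equivalence relations, repackaged for the $\{0,1\}$-valued contrast notation. The one point worth stating carefully is the equivalence ``product $=1 \iff$ all factors $=1$,'' which relies on the factors being $\{0,1\}$-valued (given by \cref{d:compat}/the definition of contrast functions) so that no cancellation can occur. I would keep the proof to three or four lines, explicitly checking reflexivity, symmetry, and transitivity of the conjunction, and note in passing that this covers arbitrary mixtures of distinctive and invariant factors since each individual factor was assumed admissible.
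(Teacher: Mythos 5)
Your proof is correct and follows essentially the same route as the paper: reflexivity and symmetry are inherited factorwise, and transitivity follows from the observation that the $\{0,1\}$-valued product equals $1$ iff every factor equals $1$, reducing the claim to the fact that an intersection of equivalence relations is an equivalence relation. The paper's proof is just a terser version of the same argument, spelling out only the transitivity chain.
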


\begin{proof}
The reflexive and symmetric properties are obviously inherited.
For the transitive property, note that $c(T,T') = 1$ if, and only if, $\forall m: c(t_m,t_m')=1$.
Hence:
\begin{multline*}
   c(T,T') =c(T',T'') = 1 \\
   ~~~\Rightarrow~~~
   \forall m : c(t_m,t_m')=c(t_m',t_m'')=1
   \\
   ~~~\Rightarrow~~~
   \forall m : c(t_m,t_m'')=1
   ~~~\Rightarrow~~~
  c(T,T'')=1.
\end{multline*}
\end{proof}

Finally, we show that, essentially, the formula above is the only reasonable one.
For this, we only require $c(T,T')$ to be monotonic in the individual factors; i.e., if more factors become $1$, then $c(T,T')$ can only grow:

\begin{definition}
We say that $c(T,T')$ is monotonic in the individual factors if, and only if, for any three transformations $T,T',T''$ such that
$c(t_m,{t'}_m) \leq c(t_m,t''_m)$
for all the factors, then we also have 
$c(T,T')\leq c(T,T'')$.
\end{definition}

Next, we show that $c$ can only have a very limited form:

\begin{lemma}\label{l:prod2}
Suppose that the admissible monotonic contrast $c(T,T')$ is expressible solely as a function of the individual admissible contrasts $c(t_m,{t'}_m)$ for $m=1,\dots,M$.
Then, up to a permutation of the transformations, we can always write
$$
 c(T,T') = \prod_{i=1}^{m} c(t_i,{t'}_i)
$$
where $0\leq m \leq M$.
In particular, $m=M$ is the only option that is guaranteed not to ignore some of the factors.
\end{lemma}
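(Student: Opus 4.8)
The plan is to exploit the fact that $c(T,T')$ is Boolean-valued and, by hypothesis, depends only on the tuple of individual contrasts $(c(t_1,t_1'),\dots,c(t_M,t_M'))\in\{0,1\}^M$. So there is a fixed Boolean function $F:\{0,1\}^M\to\{0,1\}$ with $c(T,T')=F(c(t_1,t_1'),\dots,c(t_M,t_M'))$. First I would record the normalization $F(1,1,\dots,1)=1$: this follows from reflexivity, since taking $T'=T$ makes every individual contrast equal to $1$ and forces $c(T,T)=1$. Second, monotonicity in the individual factors translates directly into $F$ being a monotone Boolean function: if $u\le v$ componentwise in $\{0,1\}^M$ then $F(u)\le F(v)$ (one realizes any such pair $u\le v$ as the contrast vectors of a suitable triple $T,T',T''$, using that each $t_m$ can independently be made to agree or disagree).

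Next I would invoke admissibility (transitivity) of $c(T,T')$ to pin down the shape of $F$. The key claim is: if $F$ depends on coordinate $i$ at all, then $F(u)=0$ whenever $u_i=0$ — i.e.\ coordinate $i$ is a ``veto'' coordinate. Suppose not: then by monotonicity there is some input with $u_i=0$ yet $F(u)=1$; combined with $F$ depending on $i$, monotonicity gives inputs $u\le v$ differing only in coordinate $i$ with $F(u)=0$, $F(v)=1$. I would then construct three transformations $T,T',T''$ whose pairwise individual-contrast vectors realize $(\text{vector with }i\text{-th slot }0)$, $(\text{vector with }i\text{-th slot }1)$ appropriately so that $c(T,T')=c(T',T'')=1$ but $c(T,T'')=0$, contradicting transitivity. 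Here it matters that distinctive contrasts come from an equivalence relation with at least (say) two classes realizable within the batch, which lets me pick the $i$-th component of $T$ to agree with $T'$, and $T'$'s to agree with $T''$, while $T$ and $T''$ disagree in coordinate $i$; the invariant coordinates are trivially always $1$ so they never obstruct. Therefore every coordinate $F$ genuinely depends on is a veto coordinate, which means $F(u)=1$ exactly when $u_i=1$ for all $i$ in the dependency set $S\subseteq\{1,\dots,M\}$, i.e.\ $F(u)=\prod_{i\in S}u_i$, so $c(T,T')=\prod_{i\in S}c(t_i,t_i')$.

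Finally I would clean up the statement. Reindexing (the ``up to a permutation'' clause) lets me assume $S=\{1,\dots,m\}$ with $m=|S|$, giving $c(T,T')=\prod_{i=1}^m c(t_i,t_i')$ with $0\le m\le M$ (the case $m=0$ being the constant $c\equiv 1$, which is indeed admissible — the indiscrete equivalence relation). For the last sentence, I would observe that any coordinate not in $S$ is simply ignored by $c$, so if one wants $c$ to be sensitive to every transformation factor the only choice is $m=M$, i.e.\ the full product $\prod_{m=1}^M c(t_m,t_m')$ of \cref{l:prod}.

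The main obstacle is the transitivity argument showing ``depends on coordinate $i$ $\Rightarrow$ coordinate $i$ is a veto.'' The delicate point is constructing the witnessing triple $T,T',T''$ inside the given transformation set $\mathcal{T}$: one must argue that for a distinctive factor there really are (at least within the relevant batch) enough distinct values to make two transformations agree in that slot while a third disagrees, and simultaneously control all other slots to hit the prescribed contrast vectors $u$ and $v$. This is where the hypothesis that each $c(t_m,t_m')$ is itself an admissible (distinctive or invariant) contrast does the real work, and it is worth stating the mild genericity assumption on $\mathcal{T}$ (that distinctive factors actually take more than one value) explicitly, since on a degenerate batch a distinctive factor could collapse and $F$ would be underdetermined.
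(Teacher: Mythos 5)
Your proposal is correct and follows the same overall strategy as the paper's proof: both reduce $c(T,T')$ to a Boolean function of the vector of individual contrasts, use reflexivity and monotonicity to constrain it, and exploit transitivity on explicitly constructed triples of transformations (mixing components coordinate-by-coordinate so as to realize prescribed contrast vectors) to pin down its form. The only genuine difference is the finishing combinatorial step. The paper first shows $h(v)=1\Rightarrow h(\bar v)=0$ via complementary triples and then argues from a vector of minimal support; you instead show that every coordinate the function actually depends on must be a ``veto'' coordinate, which delivers $F(u)=\prod_{i\in S}u_i$ somewhat more directly and avoids the paper's slightly delicate ``we conclude'' at the end of the minimal-support argument. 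To make your veto step airtight you should note explicitly that the $(T,T'')$ contrast vector can be forced to equal $v\wedge w$, which lies below the input $u$ where $F$ vanishes, so monotonicity gives $F(v\wedge w)=0$ and transitivity is violated. Both arguments rest on the same mild genericity assumption --- each distinctive factor must admit at least two values so that the witness triples exist --- which you rightly make explicit and the paper leaves implicit.
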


\begin{proof}
From the assumptions, we can write
$$
c(T,T') = h \circ v(T,T')
$$
where $h$ is a function of the binary vector
$$
 v(T,T') = (c(t_1,{t'}_1), \dots, c(t_M,{t'}_M)) \in \mathbb{B}^M.
$$
Furthermore, since invariant factors are constant, they do not affect the function; hence, without loss of generality we can assume that all factors are distinctive.

Since all factors are distinctive, we can construct two transformations
$T'=({t'}_1,\dots,{t'}_M)$
and
$T''=(t''_1,\dots,t''_M)$
such that $v(T',T'')=(0,\dots,0)$ (i.e., all the contrasts $c({t'}_m,t''_m)$ are null).
If $c(T',T'') = 1$, then, due to monotonicity, $c(T',T'')$ is identically 1 and the lemma is proved for $m=0$.

If not, let $c(T',T'')=h(0,\dots,0)=0$. Then, for any given binary vector $v$, we can construct a transformation $T=(t_1,\dots,t_M)$ such that $v(T,T')=v$ and $v(T,T'')=\bar v$ as follows:
$$
   t_m = \begin{cases}
    t_m', & \text{if\ } v_m = 1,\\
    t_m'', & \text{otherwise}.
   \end{cases}
$$
We cannot have $c(T,T')=h(v) = h(\bar v) = c(T,T'') =1$; otherwise, due to the transitivity of $c$, we would have $c(T',T'')=h(0,\dots,0)=1$, which contradicts our assumption.
Hence, $h$ must partition the space of binary vectors in two halves, the ones for which $h(v) = 1$ and their complements $h(\bar v)=0$.

Now let $v$ be a vector with the minimal number of 1 such that $h(v)=1$. Again without  loss of generality, we can assume this is of the type $v=(1,\dots,1,0,\dots,0)$ with $m$ ones in front.
Due to monotonicity, all vectors of type $v'=(1,\dots,1,v_{m+1},\dots,v_M)$ must also have $h(v')=1$; by taking their complement, the previous result shows that all vectors $v''=(0,\dots,0,v_{m+1},\dots,v_M)$ must have $h(v'')=0$. This is also the case for any vector of the type $(v_1,\dots,v_{m},0,\dots,0)$ where any $v_i=0$ for $1\leq i\leq m$ (because $m$ is the minimum number of ones required for $h(v)=1$).
We conclude that $h(v)=1$ if, and only if, $(v_1,\dots,v_{m})=(1,\dots,1)$.
\end{proof}

\subsubsection{Forming batches}\label{s:batches}

Let $\hat{\mathcal{T}}_1\times\cdots\times\hat{\mathcal{T}}_M$ be a composite space of generalized data transformations, so that data points are indexed as $x(t_1,\dots,t_M)$.
Furthermore, let $c(t_m,t_m')$ be corresponding admissible contrast functions and let $c(T,T')$ be their product, as in~\cref{l:prod}.
As before, we assume that the functions are of two kinds:
\begin{itemize}
  \item invariant: $c(t_m,t_m')=1$.
  \item distinctive: $c(t_m,t_m')=\delta_{t_m=t_m'}$.
\end{itemize}
Let $I \subset \{1,\dots,M\}$ be the subset of indices $m$ corresponding to the invariant transformations and $D = \{1,\dots,M\}\backslash I$ the distinctive ones.

Let 
$
\operatorname{sample}(\hat{\mathcal{T}}_m;K_m)
$
be a stochastic operator that samples $K_m \leq |\hat{\mathcal{T}}_m|$ transformations from $\hat{\mathcal{T}}_m$ without replacement.
We sample a batch recursively:
\begin{itemize}
  \item Let $\mathcal{T}_1 = \operatorname{sample}(\hat{\mathcal{T}}_1;K_1)$
  \item Let 
  $
  \mathcal{T}_{m} = 
  \bigcup_{T\in \mathcal{T}_{m-1}} T \circ \operatorname{sample}(\hat{\mathcal{T}}_m;K_m)$
\end{itemize}
At each level of the recursion, each transformation is extended by sampling $K_m$ more transformations (note that no two identical transformations can be generated in this manner).
Hence $|\mathcal{T}_M| = K_1\cdots K_M$.

\begin{lemma}\label{l:count1}
There are exactly $(\prod_m K_m)(\prod_{m \in I} K_m)$ pairs of transformations $(T,T')\in\mathcal{T}_M\times\mathcal{T}_M$ for which $c(T,T')=1$.
Of these, exactly $\prod_m K_m$ are trivial pairs ($T=T'$).
Hence, there are $(\prod_m K_m)(\prod_{m \in I} K_m - 1)$ non-trivial pairs for which $c(T,T')=1$.
\end{lemma}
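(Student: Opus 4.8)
The plan is to exploit the tree structure implicit in the recursive sampling procedure of \cref{s:batches}. Since the footnote in \cref{s:main-method} lets us choose the order in which factors are sampled, I would first assume without loss of generality that the distinctive transformations are sampled before the invariant ones, i.e.\ $D=\{1,\dots,d\}$ and $I=\{d+1,\dots,M\}$ with $d=|D|$. Write $K_D=\prod_{m\in D}K_m$ and $K_I=\prod_{m\in I}K_m$, so that $|\mathcal{T}_M|=\prod_m K_m=K_D K_I$ by the count $|\mathcal{T}_M|=K_1\cdots K_M$ established just above the lemma. By \cref{l:prod} and the definitions of distinctive and invariant contrasts, $c(T,T')=1$ holds exactly when $t_m=t_m'$ for every $m\in D$; with the chosen ordering this is precisely the condition that $T$ and $T'$ have the same length-$d$ prefix $(t_1,\dots,t_d)\in\mathcal{T}_d$.

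Next I would record the combinatorial structure of the sampled batch. A straightforward induction on the recursion shows that the length-$d$ prefix map sends $\mathcal{T}_M$ onto $\mathcal{T}_d$, that $|\mathcal{T}_d|=K_D$, and that each prefix $S\in\mathcal{T}_d$ is extended to exactly $K_I$ elements of $\mathcal{T}_M$ (one for each choice of the remaining factors $t_{d+1},\dots,t_M$). Crucially, because $\operatorname{sample}$ draws without replacement, the $K_m$ children of any node carry distinct labels, so distinct prefixes are extended to disjoint subsets of $\mathcal{T}_M$; hence the fibers of the prefix map partition $\mathcal{T}_M$ into $K_D$ blocks, each of size $K_I$.

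With this in hand the count is immediate. A pair $(T,T')\in\mathcal{T}_M\times\mathcal{T}_M$ satisfies $c(T,T')=1$ iff $T$ and $T'$ lie in the same block, so the number of such ordered pairs is $\sum_{S\in\mathcal{T}_d}K_I^2=K_D K_I^2=\bigl(\prod_m K_m\bigr)\bigl(\prod_{m\in I}K_m\bigr)$. Among these, the trivial pairs with $T=T'$ number $|\mathcal{T}_M|=\prod_m K_m$, and each such pair indeed has $c(T,T)=\prod_m c(t_m,t_m)=1$. Subtracting, the number of non-trivial pairs with $c(T,T')=1$ is $\bigl(\prod_m K_m\bigr)\bigl(\prod_{m\in I}K_m\bigr)-\prod_m K_m=\bigl(\prod_m K_m\bigr)\bigl(\prod_{m\in I}K_m-1\bigr)$, as claimed.

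The only genuinely delicate point is the reduction in the first paragraph together with the partition claim in the second: everything hinges on identifying ``agreeing on all distinctive factors'' with ``lying in the same fiber of the prefix map'', which requires both the freedom to sample the distinctive factors first and the without-replacement property of $\operatorname{sample}$ (without the latter the blocks need not be disjoint or of equal size, and the stated counts would hold only in expectation). Once that identification is made, the remaining steps are elementary counting.
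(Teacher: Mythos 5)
The paper states \cref{l:count1} without proof, so there is no official argument to compare against; judged on its own terms, your counting is correct, and you have put your finger on exactly the assumption the lemma silently relies on. One point should be made explicit, though: the reduction ``assume the distinctive factors are sampled first'' is not genuinely without loss of generality. The footnote only says the \emph{designer} may choose the sampling order freely; once an order is fixed, the recursion produces a different set $\mathcal{T}_M$, and the lemma's count can fail for orders that place a distinctive factor below an invariant one. Concretely, take $M=2$ with $I=\{1\}$, $D=\{2\}$: the recursion draws an independent set of $K_2$ distinctive transformations for each of the $K_1$ invariant branches, and if these sets are pairwise disjoint (the generic situation when $|\hat{\mathcal{T}}_2|$ is large) the only pairs with $c(T,T')=1$ are the $K_1K_2$ trivial ones, not the claimed $(K_1K_2)\cdot K_1$. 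So your first paragraph should be stated as an added hypothesis --- all distinctive factors sit above all invariant ones in the hierarchy, or any distinctive factor sampled lower down is drawn exhaustively, $K_m=|\hat{\mathcal{T}}_m|$, as happens for the binary factors in the paper's instantiation --- rather than as a harmless relabelling. With that hypothesis in place the rest is sound: without-replacement sampling makes all $K_D$ prefixes distinct and gives each fiber exactly $K_I$ leaves, the fibers partition $\mathcal{T}_M$, and the counts $K_DK_I^2$, $K_DK_I$, and their difference follow. (Minor nit: disjointness of the fibers of the prefix map is automatic for any map; what without-replacement actually buys you is that there are exactly $K_D$ fibers, each of size exactly $K_I$.)
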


\begin{lemma}\label{l:count2}
For each $T \in \mathcal{T}_M$, there are exactly $(\prod_m K_m) - (\prod_{m \in I} K_m)$ pairs $(T,T'')$ such that $c(T,T'')=0$.
\end{lemma}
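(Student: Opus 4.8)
The plan is to reduce the statement to counting the \emph{complementary} event and then read that count off the sampling tree. Fix $T\in\mathcal{T}_M$. Since $c(T,T'')\in\{0,1\}$ for every $T''$, and since the recursion of \cref{s:batches} generates each composition in the batch exactly once (so $|\mathcal{T}_M|=\prod_m K_m$), the number of $T''\in\mathcal{T}_M$ with $c(T,T'')=0$ equals $\prod_m K_m - n_1(T)$, where $n_1(T):=|\{T''\in\mathcal{T}_M : c(T,T'')=1\}|$. Thus it suffices to show $n_1(T)=\prod_{m\in I}K_m$ for \emph{every} $T$.

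To evaluate $n_1(T)$ I would invoke \cref{l:prod}: $c(T,T'')=\prod_{m=1}^M c(t_m,t_m'')$, and since each invariant factor contributes the constant $1$ while each distinctive factor contributes $\delta_{t_m=t_m''}$, we get $c(T,T'')=1$ if and only if $t_m''=t_m$ for all $m\in D$. Now reading this off the hierarchical sampling tree: after ordering the recursion so that the distinctive factors are sampled first (the sampling order is a free design choice, cf. the footnote in \cref{s:main-method}), the tuple $(t_m)_{m\in D}$ is a specific node at depth $|D|$, and the $T''$ with $c(T,T'')=1$ are precisely the leaves lying below that node. By construction the subtree below any depth-$|D|$ node branches with factors $K_m$, $m\in I$, at its successive levels, hence contains exactly $\prod_{m\in I}K_m$ leaves. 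Therefore $n_1(T)=\prod_{m\in I}K_m$, independently of $T$, and substituting yields the claimed count $\prod_m K_m - \prod_{m\in I}K_m$.

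As a cross-check — and the route I would prefer if one does not want to rely on reordering the sampling — note that \cref{l:count1} already gives $\sum_{T\in\mathcal{T}_M} n_1(T) = \big(\prod_m K_m\big)\big(\prod_{m\in I}K_m\big)$; together with the balancedness of the recursive construction (the number of batch elements sharing any fixed value of a factor is the same across values), which forces $n_1(T)$ to be constant in $T$, one again obtains $n_1(T)=\prod_{m\in I}K_m$ and hence the stated number of pairs with $c(T,T'')=0$.

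I expect the only genuinely delicate point to be the claim that $n_1(T)$ is independent of $T$. Under the distinctive-factors-first convention this is immediate from the tree description above; for an arbitrary interleaving of invariant and distinctive factors one must argue that an invariant factor sampled ahead of some distinctive factor still yields a balanced count, which is exactly where \cref{l:count1} (or the balancedness noted in \cref{s:main-method}) is used. Everything else is elementary bookkeeping on the product $\prod_m K_m$.
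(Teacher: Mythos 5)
Your proof is correct, and in fact the paper states \cref{l:count1,l:count2} bare, with no proof at all, so your argument fills a gap rather than paralleling an existing derivation. The core of your argument is the natural one: count the complement, use \cref{l:prod} to see that $c(T,T'')=1$ exactly when $T''$ agrees with $T$ on every distinctive factor, and then read off from the hierarchical construction (with distinctive factors sampled first, as the footnote in \cref{s:main-method} permits and as every instantiation in the paper does) that these $T''$ are precisely the $\prod_{m\in I}K_m$ leaves below the distinctive prefix of $T$; collisions of distinctive values across different branches cannot inflate this count because agreement on an earlier distinctive factor already pins $T''$ to the same subtree. One caveat on your closing remark and on your ``cross-check'': for an interleaving in which an invariant factor is sampled \emph{before} some distinctive factor, the count is generally \emph{not} balanced and the lemma can fail --- e.g.\ with $M=2$, $I=\{1\}$, $D=\{2\}$, $K_1=K_2=2$, the distinctive values are resampled independently in each invariant branch and generically never coincide, so $n_1(T)=1$ rather than $K_1=2$. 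Hence the distinctive-first convention is not merely convenient but necessary, and neither \cref{l:count1} nor the balancedness remark rescues the interleaved case, since \cref{l:count1} implicitly relies on the same convention. Your primary argument, which adopts that convention from the outset, is the right one; the fallback via \cref{l:count1} should be dropped or restricted accordingly.
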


For example, in SimCLR $M=2$, $D=\{1\}$, $I=\{2\}$, 
$K_1=B/2$, $K_2=2$, $|\mathcal{T}_2|=B$, there are
$B(2-1)=B$ non-trivial pairs of transformations for which $c(T,T')=1$, and, for each $T$, there are $B - 2$ pairs of transformations for which $c(T,T'')=0$.

The lemmas above suggest that we should pick $K_m\geq 2$ for at least one invariant factor and at least $K_m\geq 2$ for at least one distinctive factor, as otherwise~\cref{e:contrastive} is degenerate.

\subsubsection{Limitations}\label{s:limitations}

In general, we want more restrictive requirements than the one described above.
When learning $f$, difficult (and therefore interesting) cases amount to: learning to be sensitive to `small' variations in the distinctive factors and learn to be insensitive to `large' variations in the invariant factors.
For the former, we would like $f$ to observe variations in a single distinctive factor at a time, as these are the `smallest'.
For these individual variations to exist at all in the batch, we should choose $K_m \geq 2$ for all distinctive factors $m\in D$.

Even so, the hierarchical scheme in general \emph{prevents} us from observing all individual variations.
In fact, suppose that two transformations $T$ and $T'$ in $\mathcal{T}_M$ differ for factor $m$ (i.e. $t_m\not=t_m'$).
Then, the remaining factors $t_{m+1}\not=t_{m+1}',\dots$ also differ in general because successive transformations are sampled independently in different branches of the tree.
This means that we cannot, in general, observe a change in $t_{m}$ \emph{alone}, so the function $f$ may not learn to be distinctive to this `minimal' change in isolation.

Note that this is a limitation that affects our sampling scheme as well as existing methods such as SimCLR.
Fortunately, in practice this is often not an issue.
There are in fact two mitigating factors, which apply to most existing formulations, including the new ones presented here.

First, some transformations spaces $\hat{\mathcal{T}}_m$ are very small, and in fact binary (e.g., modality splitting, time reversal).
In this case, $K_m=2$ means that transformations are sampled exhaustively, so for level $m$ the hierarchical sampling scheme does extract all possible combinations of transformations.

Second, in other cases the issue is moot due to the nature of the transformations and the data.
For instance, in SimCLR the first transformation $t_1$ amounts to sampling a certain image $x_i$, and the second transformation $t_2$ amounts to sampling two data augmentation $g_{1i}(x_i)$ and $g_{2i}(x_i)$, different for each image.
The issue here is that we cannot observe a change in the index $i$ for the same augmentation $g(x_1)$ and $g(x_2)$, as these data points do not exist in the batch.
This means that the representation $f$ can only learn to distinguish two different images $x_i$ that also have two different augmentations applied to them.
Because of the particular nature of the training data (ImageNet) this is likely irrelevant since different images $x_i$ are unrelated in any case, so applying transformations does not significantly alter their statistical relationships.

However, note that this is not \emph{always} the case.
For instance, if SimCLR was applied to a dataset of pre-aligned faces (for the purpose of learning face recognition), then being unable to contrast different faces $g(x_1)$ and $g(x_2)$ under the same transformation $g$ would make negative pairs far to easy to discriminate.

\subsection{Reduction in variance theorem}

\subsubsection{Proof of \cref{thm:variance}}
For ease of notation, we will express eq. \ref{e:contrastive} as the expected value of a loss function $\ell$, which subsumes the weight ($w$), contrast ($c$), feature extractor $(\Phi$) and log-softmax functions:

\begin{equation}
\mathcal{L}=\mathbb{E}_{T,T'\sim\hat{\mathcal{T}}}\left[\ell\left(x(T),x(T')\right)\right].\label{eq:ideal-contrastive-objective}
\end{equation}

The expectation is over pairs of transformations in $\hat{\mathcal{T}}=\hat{\mathcal{T}}_{1}\times\ldots\times\hat{\mathcal{T}}_{M}$, the space of all compositions of transformations, which can be applied to the data $x$. Note that eq. \ref{e:contrastive} contains a sum over a third transformation ($T''$) to compute the softmax's normalization, which is also subsumed by $\ell$ in eq. \ref{eq:ideal-contrastive-objective} as this third transformation is not essential for the rest of the proof. We will separate each transformation into invariant and distinctive parts, $T=(T^{I},T^{V})$ respectively with $T^{I}\in\hat{\mathcal{T}}_{I}$ and $T^{V}\in\hat{\mathcal{T}}_{V}$ (see sec. \ref{s:batches}). Note that this separation is merely a notational convenience; the individual transformations can be applied to the data in \emph{any} order, with $x(T^{I},T^{V})=x(t_{1}\circ\ldots\circ t_{M})$, and each $t_{i}$ belonging to either $T^{I}$ or $T^{V}$. Then, eq. \ref{eq:ideal-contrastive-objective} becomes:
\[
\mathcal{L}=\mathbb{E}_{T^{I},T'^{I}\sim\hat{\mathcal{T}}_{I},\,T^{V},T'^{V}\sim\hat{\mathcal{T}}_{V}}\left[\ell\left(x(T^{I},T^{V}),x'(T'^{I},T'^{V})\right)\right].
\]

Consider a mini-batch of data sample pairs and their associated transformation compositions, $\mathcal{B}_{\textrm{direct}}=\left\{ T_{i}^{I},T_{i}^{V},{T'}_{i}^{I},{T'}_{i}^{V}\right\} _{i=1}^{K_{I}^{2}K_{V}^{2}}$, sampled as $T_{i}^{I},{T'}_{i}^{I}\sim\hat{\mathcal{T}}_{I}$ and $\,T_{i}^{V},{T'}_{i}^{V}\sim\hat{\mathcal{T}}_{V}$. The batch size is a function of $K_{I}=\prod_{j\in I}K_{j}$ and $K_{V}=\prod_{j\in V}K_{j}$, the number of sampled invariant and distinctive transformations in our method, respectively. The batch size of $K_{I}^{2}K_{V}^{2}$ was chosen to allow a direct comparison. The expected value of the loss over this batch is then the simple empirical average:
\begin{equation}
\hat{\mathcal{L}}_{\textrm{d}}=\frac{1}{K_{I}^{2}K_{V}^{2}}\sum_{i}^{K_{I}^{2}K_{V}^{2}}\ell\left(x(T_{i}^{I},T_{i}^{V}),x({T'}_{i}^{I},{T'}_{i}^{V})\right).\label{eq:direct-sampling-objective}
\end{equation}

Now consider the domain of transformed samples $\mathcal{X}=\{x(T^{I},T^{V}):T^{I}\in\hat{\mathcal{T}}_{I},T^{V}\in\hat{\mathcal{T}}_{V}\}$. Due to the assumed injectivity of all $t\in T^{I}$, we may partition the domain using one partition $\mathcal{X}_{j}=\{x(T^{I},T_{j}^{V}):T^{I}\in\hat{\mathcal{T}}_{I}\}$ per distinctive transformation $T_{j}^{V}$, i.e.: $\mathcal{X}=\cup_{j}^{K_{V}}\mathcal{X}_{j}$, with $\mathcal{X}_{j}\cap\mathcal{X}_{j'}=\emptyset,\,\forall j,j'$. The probability distribution of the samples has density $p(T^{I},T^{V})$, and the density in each partition is thus $p_{j}(T^{I})=K_{V}p(T^{I})\delta_{T^{I}\in\mathcal{X}_{j}}$, with $\mathbb{\delta}$ the indicator function.

GDT can then be interpreted as a stratified sampling method, with one stratum (partition) per \emph{pair} of distinctive transformations. The domain being sampled by the expectation in eq. \ref{eq:ideal-contrastive-objective} is $\mathcal{X}^{2}=\cup_{jj'}^{K_{V},K_{V}}\mathcal{X}_{j}\times\mathcal{X}_{j'}$, and stratified sampling consists of sampling an equal number of $K_{I}^{2}$ sample pairs from each of the $K_{V}^{2}$ partitions:
\begin{equation}
\hat{\mathcal{L}}=\frac{1}{K_{V}^{2}K_{I}^{2}}\sum_{ii'jj'}^{K_{I},K_{I},K_{V},K_{V}}\ell\left(x(T_{i}^{I},T_{j}^{V}),x(T_{i'}^{I},T_{j'}^{V})\right).\label{eq:gdt-sampling-objective}
\end{equation}
Note the subtle difference from eq. \ref{eq:direct-sampling-objective} in the summation ranges, and that the \emph{same} samples and transformations are reused for both elements of each pair, instead of being sampled independently to fill a mini-batch. To make the following derivations easier, note that we can equivalently express eq. \ref{eq:gdt-sampling-objective} as:
\[
\hat{\mathcal{L}}=\frac{1}{K_{V}^{2}}\sum_{jj'}^{K_{V},K_{V}}\hat{\mathcal{L}}_{jj'},
\]
 with $\hat{\mathcal{L}}_{jj'}=\frac{1}{K_{I}^{2}}\sum_{ii'}^{K_{I},K_{I}}\ell\left(x(T_{i}^{I},T_{j}^{V}),x(T_{i'}^{I},T_{j'}^{V})\right)$. We will first show that this pairwise stratified sampling is an unbiased estimate of eq. \ref{eq:ideal-contrastive-objective}:
\begin{alignat*}{1}
\mathbb{E}[\hat{\mathcal{L}}] & =\frac{1}{K_{V}^{2}}\sum_{jj'}^{K_{V},K_{V}}\mathbb{E}[\hat{\mathcal{L}}_{jj'}]\\
 & =\frac{1}{K_{V}^{2}}\sum_{jj'}^{K_{V},K_{V}}\mathcal{L}_{jj'}\\
 & =\mathcal{L},
\end{alignat*}
where we use the expectation $\mathcal{L}_{jj'}$ of the loss function evaluated on the partition $jj'$ (corresponding to distinctive transformations with indices $j$ and $j'$), as $\mathcal{L}_{jj'}=\mathbb{E}_{T^{I}\in\mathcal{X}_{j},T'^{I}\in\mathcal{X}_{j'}}\left[\ell\left(x(T^{I},T_{j}^{V}),x(T'^{I},T_{j'}^{V})\right)\right]$.

Similarly, we can also define each partition's loss variance $\sigma_{jj'}^{2}=\mathbb{V}_{T^{I}\in\mathcal{X}_{j},T'^{I}\in\mathcal{X}_{j'}}\left[\ell\left(x(T^{I},T_{j}^{V}),x(T'^{I},T_{j'}^{V})\right)\right]$. Then, from eq. \ref{eq:gdt-sampling-objective} we obtain directly
\begin{alignat*}{1}
\mathbb{V}[\hat{\mathcal{L}}] & =\frac{1}{K_{V}^{4}}\sum_{jj'}^{K_{V},K_{V}}\mathbb{V}[\mathcal{L}_{jj'}]\\
 & =\frac{1}{K_{V}^{4}K_{I}^{2}}\sum_{jj'}^{K_{V},K_{V}}\sigma_{jj'}^{2}.
\end{alignat*}

As a point of comparison, the variance of the direct sampling estimate is:
\begingroup \allowdisplaybreaks
\begin{alignat*}{1}
\mathbb{V}[\hat{\mathcal{L}}_{\textrm{d}}] & =\frac{1}{K_{V}^{2}K_{I}^{2}}\left((\mathbb{E}_{(T^{I},T'^{I})\in\mathcal{X}^{2}}[\ell^{2}(x(T^{I},T^{V}),x(T'^{I},T'^{V})\right.\\
 & \qquad\qquad\;\;\left.\vphantom{\mathbb{E}_{(T^{I},T'^{I})\in\mathcal{X}^{2}}x(T'^{I},T'^{V})}x(T'^{I},T'^{V}))]-\mathcal{L}^{2})\right)\\
 & =\frac{1}{K_{V}^{2}K_{I}^{2}}\left(\frac{1}{K_{V}^{2}}\sum_{jj'}^{K_{V},K_{V}}\mathbb{E}_{T^{I}\in\mathcal{X}_{j},T'^{I}\in\mathcal{X}_{j'}}\right.\\
 & \qquad\qquad\;\;\left.\vphantom{\frac{1}{K_{V}^{2}}\sum_{jj'}^{K_{V},K_{V}}}\left[\ell^{2}\left(x(T^{I},T_{j}^{V}),x(T'^{I},T{}_{j'}^{V})\right)\right]-\mathcal{L}^{2}\right)\\
 & =\frac{1}{K_{V}^{2}K_{I}^{2}}\left(\frac{1}{K_{V}^{2}}\sum_{jj'}^{K_{V},K_{V}}\left(\mathcal{L}_{jj'}^{2}+\sigma_{jj'}^{2}\right)-\mathcal{L}^{2}\right)\\
 & =\frac{1}{K_{V}^{4}K_{I}^{2}}\sum_{jj'}^{K_{V},K_{V}}\left(\sigma_{jj'}^{2}+\left(\mathcal{L}_{jj'}-\mathcal{L}\right)^{2}\right)\\
 & \geq\frac{1}{K_{V}^{4}K_{I}^{2}}\sum_{jj'}^{K_{V},K_{V}}\sigma_{jj'}^{2}
\end{alignat*}

\endgroup

completing the proof.\qed

\subsection{Additional experimental results}\label{s:appx:more}

\subsubsection{Modality ablation\label{s:appx:cross-vs-within-modal}}

In \cref{tab:appx:ablation:modality}, we provide the results of running our baseline model (sample-distinctiveness only) within-modally instead of across modalities and find a sharp drop in performance.
\vspace{-1em}
\begin{table}[htb]
\setlength{\tabcolsep}{3pt}
\caption{\textbf{Within vs cross-modal learning.} Results on action classification performance on HMDB-51 and UCF-101 is shown for finetuning accuracy (Acc) and frozen retrieval (recall@1) after pretraining on Kinetics-400 for 50 epochs.}
  \centering
  \footnotesize
  \begin{tabular}{l c c c c @{\hskip 5pt} c c c c}
  \toprule
      & \multicolumn{2}{c}{\textbf{HMDB}} & \multicolumn{2}{c}{\textbf{UCF}} \\
                   & \textbf{Acc.} & \textbf{R@1}                 & \textbf{Acc.} & \textbf{R@1}  \\
    \midrule 
          Within-modal & $37.8$ & $13.9$           & $76.4$ & $28.0$ \\
          Cross-modal  & $\textbf{52.4}$ & $\textbf{21.8}$ & $\textbf{87.6}$ & $\textbf{54.8}$  \\
    \bottomrule
  \end{tabular}
  \vspace{5pt}
  \label{tab:appx:ablation:modality}
\end{table}

\subsubsection{Dataset details \label{s:appx:datasetdetails}}
The Kinetics-400 dataset~\citep{kinetics} is human action video dataset, consisting of 240k training videos, with each video representing one of 400 action classes. After filtering out videos without audio, we are left with 230k training videos, which we use for pretraining our model.

HT100M~\citep{miech2019howto100m}  is a large-scale instructional video collection of 1.2 million Youtube videos, along with automatic speech recognition transcripts. 
There are more than 100 million clips (ASR segments) defined in HowTo100M.




HMDB-51~\citep{kuehne2011hmdb} consists of 7K video clips spanning 51 different human activities.
HMDB-51 has three train/test splits of size 5k/2k respectively.

UCF-101~\citep{UCF101} contains 13K videos from 101 human action classes, and has three train/test splits of size 11k/2k respectively.

IG65M~\citep{Ghadiyaram2019} is a large-scale weakly supervised dataset collected from a social
media website, consisting of 65M videos of human action events. We use the all the videos in the
dataset for pretraining.



\subsubsection{Preprocessing details \label{s:appx:preprocdetails}}
The video inputs are 30 consecutive frames from a randomly chosen starting point in the video.
These frames are resized such that the shorter side is between 128 and 160, and a center crop of size 112 is extracted, with color-jittering applied.
A random horizontal flip is then applied with probability 0.5, and then the inputs' channels are z-normalized using mean and standard deviation statistics calculated across each dataset.

One second of audio is processed as a $1\times40\times99$ image, by taking the log-mel bank features with 40 filters and 99 time-frames after random volume jittering between 90\% and 110\% is applied to raw waveform, similar to~\citep{Arandjelovic17}.
The spectrogram is then Z-normalized, as in~\citep{avts}.
Spec-Augment is then used to apply random frequency masking to the spectrogram with maximal blocking width 3 and sampled 1 times. Similarly, time-masking is applied with maximum width 6 and sampled 1 times.

For the text, we remove stop words from the narrations as in ~\citep{miech2019howto100m}. 
For each narration, we take a maximum of 16 consecutive words covering a max duration of 4 seconds as in ~\citep{miech2019endtoend}. 

\subsubsection{Pretraining details \label{s:appx:traindetails}}
We use R(2+1)D-18~\citep{tran2018closer} as the visual encoder $f_v$ and ResNet~\citep{KaimingHe16} with 9 layers as the audio encoder $f_a$ unless otherwise noted; both encoders produce a fixed-dimensional output ($512$-D) after global spatio-temporal average pooling.
For the text encoder, we use the Google News self-supervised pre-trained word2vec (d=300) embedding~\citep{mikolov2013efficient}, that is linearly projected to 2048D and max-pooled as in ~\citep{miech2019endtoend}.
After the inputs are encoded by their respective modality encoders, the vectors are then passed through two fully-connected layers with intermediate size of $512$ to produce $256$-D embeddings as in~\citep{bachman2019learning} which are normalized by their L2-norm~\citep{Wu_2018_CVPR}.
The embedding is used for computing the contrastive loss, while for downstream tasks, a  linear layer after the global spatio-temporal average pooling is randomly initialized.
For NCE contrastive learning, the temperature $\rho$ is set as $1/0.07$.
For optimizing these networks, we use SGD.
The SGD weight decay is $10^{-5}$ and the SGD momentum is $0.9$.
We use a mini-batch size of $8$ on each of our $64$ GPUs giving an effective batch size of $512$ for distributed training.
The initial learning rate is set to $0.01$ which we linearly scale with the number of GPUs, after following a gradual warm-up schedule for the first $10$ epochs~\citep{goyal2017accurate}.
For Kinetics, we train for $100$ epochs (3 days), while for HT100M, we train for $40$ epochs (3 days).

\subsubsection{Ablation experiment details}
For the ablations, we only pretrain for $50$ epochs on the Kinetics-400 dataset, and $20$ epochs on the HT100M dataset, since it is a much larger dataset. 

For downstream evaluation, we only evaluate on the first fold of HMDB-51 but found the performance between folds to be close (within 1\%).



\subsubsection{Evaluation details}
All evaluation code is provided in the Supplementary Material.

\paragraph{Video Action Recognition.}
During training, we take 10 random clips of length 32 frames from each video.
For video clip augmentations, we follow a standard protocol as in \citep{avts}.
During evaluation, we uniformly sample 10 clips from each video, average softmax scores, and predict the class having the highest mean softmax score.
We then measure the mean video top-1 accuracy across all videos and all official folds.
During training, we use SGD with initial learning rate $0.0025$, which we gradually warm up to $2\cdot 10^{-2}$ in the first $2$ epochs. The weight decay is set to $5\cdot 10^{-3}$ and momentum to $0.9$.
We use a mini-batch size of $32$ and train for $12$ epochs with the learning rate multiplied by $5\cdot 10^{-2}$ at $6$ and $10$ epochs.
We compare our GDT pretrained model with both self-supervised methods, and supervised pretraining, and report average top-1 accuracies on UCF101 and HMDB-51 action recognition task across three folds in~\cref{tab:sota}.

\paragraph{Few-shot classification}
We follow the protocol in \citep{jing2018self} and evaluate our our \methodname~pretrained network using few-shot classification on the UCF-101 dataset, and additionally on HMDB-51.
We randomly sample $n$ videos per class from the train set, average the encoder's global average pooled features from ten clips per training sample and measure classification accuracy performance on the validation set using a $k$-nearest neighbor classifier, with $k$ set to $1$.

\paragraph{Video Retrieval.}
We follow the standard protocol as outlined in~\citep{clip_order}.
We use the split 1 of UCF101, and additionally HMDB-51.
We uniformly sample 10 clips per video, and average the max-pooled features after the last residual block for each clip per video.
We use these averaged features from the validation set to query the videos in the training set.
The cosine distance of representations between the query clip and all clips in the training set are computed.
When the class of a test clip appears in the classes of $k$ nearest training
clips, it is considered to be correctly predicted.
We report accuracies for $k = 1, 5, 20$ and compare with other self-supervised methods on UCF101 and HMDB-51 in ~\cref{tab:few-shot-ret}.

\begin{figure}[h]
\center
\includegraphics[width=1\linewidth]{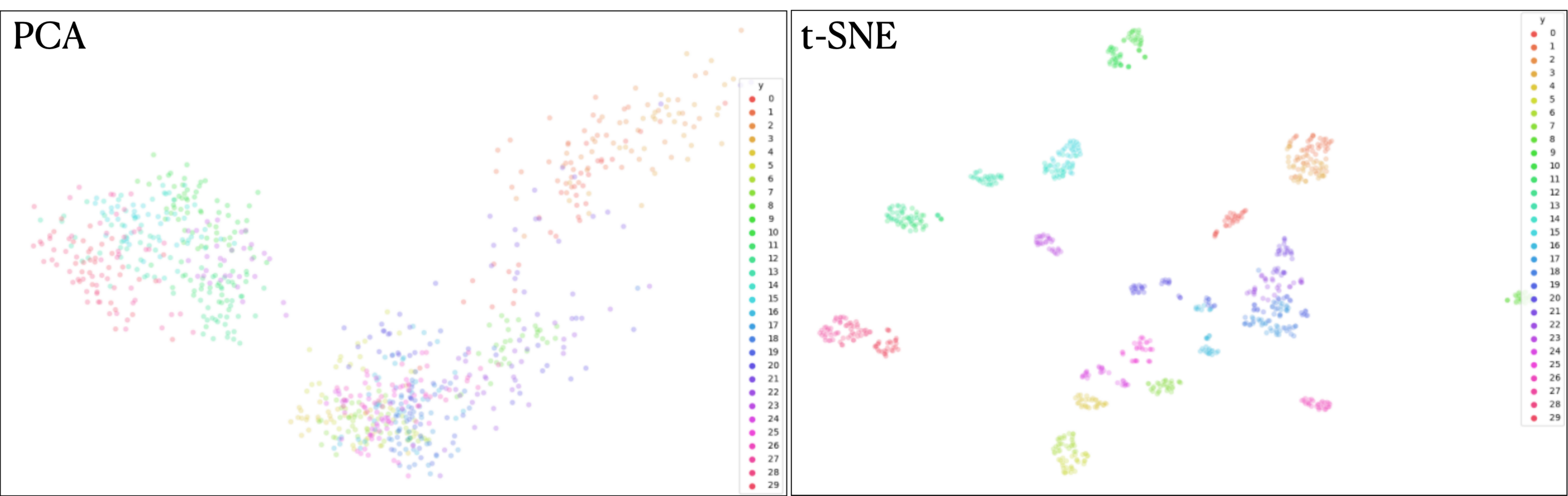}
\caption{Feature visualizations with PCA and t-SNE on 30 videos of a single, random class of HMDB-51. For each video, we sample 10 temporal clips and encode video-ID with color. Embeddings are generated from our time-shift distinct model (Tab.1 (l)).
}
\label{fig:tsne}
\end{figure}

\subsubsection{Additional Qualitative analysis}
In~\cref{fig:tsne}, we present a PCA and t-SNE~\citep{van2008visualizing} plots of the features obtained by our model (DS-d, TR-d, TS-d) (Tab. 1, row (l)).
We observe that even comparing to videos of the same action category, the individual clips are well separated, showing that the model is learning to distinguish between different time intervals.

\end{document}